\documentclass{article} 
\usepackage{natbib}
\bibliographystyle{unsrtnat}
\usepackage[margin=15mm]{geometry}

\usepackage{times}
\usepackage{styles/mysymbols}
\usepackage{hyperref}
\usepackage{url}
\usepackage{subcaption}

\usepackage{color}
\usepackage{tikz}
\usepackage{cleveref}
\usepackage{bbm}
\usepackage{comment}
\usepackage{wrapfig}
\renewcommand{\cref}{\Cref}

\crefname{thm}{Theorem}{Theorems}
\crefname{assum}{Assumption}{Assumptions}
\crefname{prop}{Proposition}{Propositions}
\crefname{lem}{Lemma}{Lemmas}
\crefname{figure}{Figure}{Figures}

\usetikzlibrary{shapes,decorations,arrows,calc,arrows.meta,fit,positioning}
\tikzset{
    -Latex,auto,node distance =1 cm and 1 cm,semithick,
    state/.style ={ellipse, draw, minimum width = 0.7 cm},
    dotstate/.style ={ellipse, draw, dashed, minimum width = 0.7 cm},
    point/.style = {circle, draw, inner sep=0.04cm,fill,node contents={}},
    bidirected/.style={Latex-Latex,dashed},
    el/.style = {inner sep=2pt, align=left, sloped}
}
\setlength\textfloatsep{10pt}
\setlength\abovecaptionskip{3pt}

\title{A Neural Mean Embedding Approach for Back-door and Front-door Adjustment}


\author{
Liyuan Xu \\
Gatsby Unit\\
\texttt{liyuan.jo.19@ucl.ac.uk}
\and
Arthur Gretton\\
Gatsby Unit\\
\texttt{arthur.gretton@gmail.com}
}
%

\begin{document}

\maketitle

\begin{abstract}

We consider the estimation of average and counterfactual treatment effects, under two settings:  \emph{back-door adjustment}  and \emph{front-door adjustment}. The goal in both cases is to recover the treatment effect without having an access to a hidden confounder. This objective is attained by first estimating the conditional mean of the desired outcome variable given relevant covariates (the ``first stage" regression), and then taking the (conditional) expectation of this function as a ``second stage" procedure.  
We propose to compute these conditional expectations directly using a regression function to the learned input features of the first stage, thus avoiding the need for sampling or density estimation. All functions and features (and in particular, the output features in the second stage) are neural networks learned adaptively from data, with the sole requirement that the final layer of the first stage should be linear. The proposed method is shown to converge to the true causal parameter, and outperforms the recent state-of-the-art methods on challenging causal benchmarks, including settings involving high-dimensional image data. 

\end{abstract}

\section{Introduction}
The goal of causal inference from observational data is to predict the effect of our actions, or \emph{treatments}, on the \emph{outcome} without performing interventions. Questions of interest can include \emph{what is the effect of smoking on life expectancy?} or  counterfactual questions, such as \emph{given the observed health outcome for a smoker, how long would they have lived had they quit smoking?} Answering these questions becomes challenging when a \textit{confounder} exists, which affects both treatment and the outcome, and causes bias in the estimation. Causal estimation requires us to correct for this confounding bias.

A popular assumption in causal inference is the   \emph{no unmeasured confounder} requirement, which means that we observe all the confounders that cause the bias in the estimation. Although a number of causal inference methods are proposed under this assumption \citep{Hill2017,Shalit2017,Shi2019,Schwab2020LearningCR}, it rarely holds in practice. In the smoking example, the confounder can be one's genetic characteristics or social status, which are difficult to measure for both technical and ethical reasons.

To address this issue, \citet{Pearl1995DAG} proposed \emph{back-door adjustment} and \emph{front-door adjustment}, which recover the causal effect in the presence of hidden confounders using a \emph{back-door variable} or \emph{front-door variable}, respectively. The back-door variable is a covariate that blocks all causal effects directed from the confounder to the treatment.
In health care, patients may have underlying predispositions to illness due to genetic or social factors (hidden), from which measurable symptoms will arise (back-door variable) - these symptoms in turn lead to a choice of treatment.
By contrast, a front-door variable blocks the path from treatment to  outcome. In perhaps the best-known example, the amount of tar in a smoker's lungs serves as a front-door variable, since it is increased by smoking, shortens life expectancy, and has no direct link to underlying (hidden) sociological traits. \citet{Pearl1995DAG} showed that causal quantities can be obtained by taking the (conditional) expectation of the conditional average outcome.
 

While \citet{Pearl1995DAG} only considered the discrete case, this framework was extended to the continuous case by \citet{Rahul2020Kernel},  using two-stage regression (a review of this and other recent approaches for the continuous case is given in  \cref{sec:related-work}). In the first stage, the approach regresses from the relevant covariates to the outcome of interest, expressing the function as a linear combination of non-linear feature maps. Then, in the second stage, the causal parameters are estimated by learning the (conditional) expectation of the non-linear feature map used in the first stage. Unlike competing methods \citep{colangelo2020double,kennedy2017nonparametric}, two-stage regression avoids fitting probability densities, which is challenging in high-dimensional settings \cite[Section 6.5]{Wasswerman2006}. \citet{Rahul2020Kernel}'s method is shown to converge to the true causal parameters and  exhibits better empirical performance than competing methods.

One limitation of the methods in \citet{Rahul2020Kernel} is that they use  fixed pre-specified feature maps from reproducing kernel Hilbert spaces, which  have a limited expressive capacity when data are complex (images, text, audio). To overcome this, we propose to employ a \emph{neural mean embedding} approach to learning task-specific adaptive feature dictionaries. At a high level, we first employ a neural network with a linear final layer in the first stage. For the second stage, we learn the (conditional) mean of the stage 1 features in the penultimate layer, again with a neural net. The approach develops the technique of \citet{xu2021learning,Xu2021Deep} and enables the model to capture complex causal relationships for high-dimensional covariates and treatments.  Neural network feature means are also used to represent (conditional) probabilities in other machine learning settings, such as representation learning  \citep{Manzil2017DeepSets} and approximate Bayesian inference \citep{pmlr-v162-xu22a}. We derive the consistency of the method based on the Rademacher complexity, a result of which is of independent interest and may be relevant in establishing consistency for broader categories of neural mean embedding approaches, including \citet{xu2021learning,Xu2021Deep}. We empirically show that the proposed method performs better than  other state-of-the-art neural causal inference methods, including those using kernel feature dictionaries.

This paper is structured as follows. In \cref{sec:prob-setting}, we introduce the causal parameters we are interested in and give a detailed description of the proposed method in \cref{sec:algorithm}. The theoretical analysis is presented in \cref{sec:theoretical-analysis}, followed by a review of related work in \cref{sec:related-work}. We demonstrate the empirical performance of the proposed method in \cref{sec:experiment}, covering
two settings: a classical back-door adjustment problem with a binary treatment, and a challenging back-door and front-door setting where the treatment consists of high-dimensional image data.

\section{Problem Setting} \label{sec:prob-setting}
In this section, we introduce the causal parameters and methods to estimate these causal methods, namely a \emph{back-door adjustment} and \emph{front-door adjustment}. Throughout the paper, we denote a random 
variable in a capital letter (e.g. $A$), the realization of this random variable in lowercase (e.g. $a$), and the set where a random variable takes values in a calligraphic letter (e.g. $\mathcal{A}$). We assume data is generated from a distribution $P$.

\paragraph{Causal Parameters} We introduce the target causal parameters using the potential outcome framework \citep{Rubin2005}. Let the treatment and the observed outcome be $A \in \mathcal{A}$ and $Y \in \mathcal{Y} \subseteq [-R, R]$. We denote the potential outcome given treatment $a$ as $Y^{(a)} \in \mathcal{Y}$. Here, we assume \emph{no inference}, which means that we observe $Y = Y^{(a)}$ when $A=a$. We denote the hidden confounder as $U \in \mathcal{U}$ and assume \textit{conditional exchangeability} $\forall a\in\mathcal{A},~ Y^{(a)} \indepe A | U$, which means that the potential outcomes are not affected by the treatment assignment. A typical causal graph is shown in \cref{fig:general-causal}. We may additionally consider the observable confounder $O \in \mathcal{O}$, which is discussed in \cref{sec:observable-confounder}.

A first goal of causal inference is to estimate the \textit{Average Treatment Effect (ATE)}\footnotemark{} $\theta_\ATE(a) = \expect{Y^{(a)}}$, which is the average potential outcome of $A=a$. We also consider \textit{Average Treatment Effect on the Treated (ATT)} $\theta_\ATT(a; a') = \expect{Y^{(a)}|A=a'}$, which is the expected potential outcome of $A=a$ for those who received the treatment $A=a'$. 
Given no inference and conditional exchangeability assumptions, these causal parameters can be written in the following form.
\begin{prop}[\citealp{Rosen1983central,ROBINS19861393}]  \label{prop:causal-param-def}
    Given unobserved confounder $U$, which satisfies no inference and conditional exchangeability, we have 
    \begin{align*}
        \theta_\ATE(a) = \expect[U]{\expect{Y|A=a, U}}, ~ \theta_\ATT(a; a') = \expect[U]{\expect{Y|A=a, U}|A=a'}.
    \end{align*}
\end{prop}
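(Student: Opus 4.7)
The plan is to establish both identities by the same three-step template: use the tower property to introduce a conditioning on the unobserved confounder $U$, then use conditional exchangeability to swap the treatment we condition on, and finally use the no-interference (consistency) assumption to replace the potential outcome $Y^{(a)}$ by the observed outcome $Y$.

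For the ATE identity, I would start from the definition $\theta_\ATE(a) = \expect{Y^{(a)}}$ and apply the tower property to write
\begin{align*}
\theta_\ATE(a) \;=\; \expect[U]{\expect{Y^{(a)}\mid U}}.
\end{align*}
Conditional exchangeability $Y^{(a)}\indepe A \mid U$ lets me rewrite the inner expectation as $\expect{Y^{(a)}\mid U, A=a}$, and then no interference lets me replace $Y^{(a)}$ by $Y$ on the event $\{A=a\}$, giving $\expect{Y^{(a)}\mid U, A=a} = \expect{Y\mid U, A=a}$. Substituting yields the claimed formula.

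For the ATT identity, I would proceed analogously but with an extra outer conditioning on $A=a'$. Starting from $\theta_\ATT(a;a') = \expect{Y^{(a)}\mid A=a'}$, the tower property gives
\begin{align*}
\theta_\ATT(a;a') \;=\; \expect[U]{\expect{Y^{(a)}\mid U, A=a'}\;\big|\; A=a'}.
\end{align*}
Conditional exchangeability collapses the inner conditioning on $A=a'$: $\expect{Y^{(a)}\mid U, A=a'} = \expect{Y^{(a)}\mid U} = \expect{Y^{(a)}\mid U, A=a}$, and then no interference again replaces $Y^{(a)}$ by $Y$, producing $\expect{Y\mid U, A=a}$ inside the outer expectation, as required.

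There is no real technical obstacle; the only subtlety is conceptual, namely being careful about the asymmetry in the two applications of conditional exchangeability (one to insert the conditioning event $\{A=a\}$ that will eventually be used by consistency, and, in the ATT case, one to remove the already-present conditioning event $\{A=a'\}$ in the inner expectation while preserving the outer conditioning). Regularity conditions (existence of all conditional expectations, integrability of $Y$) are immediate from $Y\in[-R,R]$.
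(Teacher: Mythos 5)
Your proof is correct and is the standard argument; the paper itself does not prove this proposition but simply cites \citet{Rosen1983central,ROBINS19861393}, and your three-step template (tower property over $U$, conditional exchangeability to insert/remove the conditioning on $A$, then no-interference/consistency to replace $Y^{(a)}$ by $Y$) is exactly the canonical derivation in those references. The only implicit ingredient you might flag is positivity ($P(A=a\mid U)>0$ a.s.), which is needed for the conditional expectation $\expect{Y\mid A=a,U}$ in the statement to be well defined, but the paper leaves this tacit as well.
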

If we observable additional confounder $O$, we may also consider \emph{conditional average treatment effect (CATE)}: the average potential outcome for the sub-population of $O=o$, which is discussed in \cref{sec:observable-confounder}.
Note that since the confounder $U$ is not observed, we cannot recover these causal parameters only from $(A,Y)$.

\footnotetext{In the binary treatment case $\mathcal{A} = \{0,1\}$, the ATE is typically defined as the expectation of the  \textit{difference} of potential outcome $\mathbb{E}[Y^{(1)} - Y^{(0)}]$. However, we define ATE as the expectation of potential outcome $\mathbb{E}[Y^{(a)}]$, which is a primary target of interest in a continuous treatment case, also known as \textit{dose response curve}. The same applies to the ATT as well.}

\paragraph{Back-door Adjustment} In back-door adjustment, we assume the access to the back-door variable $X \in \mathcal{X}$, which blocks all causal paths from unobserved confounder $U$ to treatment $A$. See \cref{fig:backdoor} for a typical causal graph. Given the back-door variable, causal parameters can be written only from observable variables $(A,Y,X)$ as follows.

\begin{prop}[{\citealp[Theorem 1]{Pearl1995DAG}}] \label{prop:backdoor-identify}
    Given the back-door variable $X$, we have 
    \begin{align*}
        \theta_\ATE(a) = \expect[X]{ g(a, X)}, ~ \theta_\ATT(a; a') = \expect[X]{g(a, X)|A=a'},
    \end{align*}
    where $g(a,x) = \expect{Y|A=a, X=x}$.
\end{prop}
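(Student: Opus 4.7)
The plan is to derive \cref{prop:backdoor-identify} directly from \cref{prop:causal-param-def} by establishing that the back-door variable $X$ inherits the conditional exchangeability property originally stated for $U$. The first step is to argue that, because $X$ blocks every causal path from $U$ to $A$, the derived conditional independence $Y^{(a)} \indepe A \mid X$ holds. Combined with the consistency assumption $Y = Y^{(a)}$ on the event $\{A = a\}$, this yields
\begin{align*}
g(a, x) = \expect{Y \mid A = a, X = x} = \expect{Y^{(a)} \mid A = a, X = x} = \expect{Y^{(a)} \mid X = x},
\end{align*}
so that $g(a, X) = \expect{Y^{(a)} \mid X}$ almost surely.

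For the ATE part, taking the outer expectation over $X$ and applying the tower property gives $\expect[X]{g(a, X)} = \expect[X]{\expect{Y^{(a)} \mid X}} = \expect{Y^{(a)}} = \theta_\ATE(a)$; notice that we never have to integrate over the unobserved $U$ explicitly, because $X$ already does the job of decoupling $A$ from $Y^{(a)}$. For the ATT part, the same substitution together with the conditional independence $Y^{(a)} \indepe A \mid X$ allows us to introduce $A = a'$ inside the inner conditioning:
\begin{align*}
\expect[X]{g(a, X) \mid A = a'} = \expect{\expect{Y^{(a)} \mid X} \mid A = a'} = \expect{\expect{Y^{(a)} \mid X, A = a'} \mid A = a'} = \expect{Y^{(a)} \mid A = a'},
\end{align*}
which is $\theta_\ATT(a; a')$ by definition.

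The hard part will be formally deriving the conditional independence $Y^{(a)} \indepe A \mid X$ from the stated assumptions, since the excerpt only postulates exchangeability relative to $U$ and describes $X$ informally as ``blocking all causal paths from $U$ to $A$.'' A clean way to handle this is to invoke Pearl's d-separation argument on the causal DAG of \cref{fig:backdoor}, using the back-door criterion to conclude that conditioning on $X$ renders $A$ independent of $Y^{(a)}$. Alternatively, a structural-equation formulation such as $A = f_A(U, X, \eta_A)$ with independent exogenous noise $\eta_A$ permits direct verification of the independence from $Y^{(a)} \indepe A \mid U$ and the Markov structure relating $U$, $X$, and $A$. Once that step is accepted, the remainder of the proof is a routine application of iterated expectations, as sketched above.
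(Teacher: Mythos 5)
Your proposal is correct, and it follows essentially the same route the paper relies on: the paper gives no proof of its own (it cites Pearl's Theorem 1), but its remark that one may ``treat the back-door variable $X$ as the only confounder'' is exactly your argument of transferring conditional exchangeability from $U$ to $X$ and then applying iterated expectations as in \cref{prop:causal-param-def}. Your identification of $Y^{(a)} \indepe A \mid X$ (which here follows because $X$ is the sole parent of $A$ in \cref{fig:backdoor}, so $A$ depends on $(U,Y^{(a)})$ only through $X$) together with the implicit positivity needed for $g(a,x)$ to be well defined is the right place to put the rigor, and your two proposed routes for establishing it are both standard and valid.
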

By comparing \cref{prop:backdoor-identify} to \cref{prop:causal-param-def}, we can see that causal parameters can be learned by treating the back-door variable $X$ as the only ``confounder'', despite the presence of the additional hidden confounder $U$. Hence, we may apply any method based on the ``no unobservable confounder'' assumption to back-door adjustment.

\paragraph{Front-door Adjustment} Another adjustment for causal estimation is \textit{front-door adjustment}, which uses the causal mechanism to determine the causal effect. Assume we observe the front-door variable $M\in \mathcal{M}$, which blocks all causal paths from treatment $A$ to outcome $Y$, as in \cref{fig:frontdoor}. Then, we can recover the causal parameters as follows.

\begin{prop}[{\citealp[Theorem 2]{Pearl1995DAG}}] \label{prop:frontdoor-identify}
    Given the front-door variable $M$, we have 
    \begin{align*}
        \theta_\ATE(a) = \expect[A']{\expect[M]{g(A', M)|A=a}}, ~ \theta_\ATT(a; a') = \expect[M]{g(a', M)|A=a},
    \end{align*}
    where $g(a,m) = \expect{Y|A=a, M=m}$ and $A' \in \mathcal{A}$ is a random variable that follows the same distribution as treatment $A$.
\end{prop}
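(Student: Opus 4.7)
The plan is to derive both identities from the conditional-independence structure encoded by the front-door graph in \cref{fig:frontdoor}, phrased in the potential outcomes language of the paper. The relevant consequences of the graph, together with consistency ($Y = Y^{(a,m)}$ when $A=a, M=m$ and $M = M^{(a)}$ when $A=a$), are: (a) the exclusion restriction $Y^{(a,m)}=Y^{(m)}$, since $M$ mediates every directed path from $A$ to $Y$; (b) $M^{(a)} \indepe A$, since $U$ does not directly influence $M$; and (c) both $Y^{(m)} \indepe M \mid A$ and $Y^{(m)} \indepe M^{(a)} \mid A$, since conditioning on $A$ blocks the only back-door path $M\leftarrow A\leftarrow U\to Y$.

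For the ATE I would start from (a), which gives $Y^{(a)}=Y^{(a,M^{(a)})}=Y^{(M^{(a)})}$, so that
\begin{align*}
\theta_\ATE(a) = \expect{Y^{(M^{(a)})}} = \int \expect{Y^{(m)}} \, dP_{M^{(a)}}(m),
\end{align*}
the second equality using that $Y^{(m)}$ and $M^{(a)}$ are marginally independent (a consequence of (b) together with the conditional independence in (c)). By (b) and consistency, the law of $M^{(a)}$ coincides with $P(M\mid A=a)$. The integrand is then handled by \cref{prop:backdoor-identify} applied to the sub-problem in which $M$ is the treatment, $Y$ is the outcome, $U$ is hidden, and $A$ serves as the back-door variable; the required back-door blocking condition is exactly the first half of (c). This yields $\expect{Y^{(m)}} = \expect[A']{g(A',m)}$, and swapping the two independent expectations produces the claimed ATE formula.

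For the ATT I would repeat the decomposition while conditioning on $A=a'$, obtaining
\begin{align*}
\theta_\ATT(a;a') = \int \expect{Y^{(m)} \mid M^{(a)}=m,\, A=a'} \, dP_{M^{(a)}\mid A=a'}(m).
\end{align*}
By (b) the outer distribution again reduces to $P(M\mid A=a)$; by the second conditional independence in (c) the inner expectation simplifies to $\expect{Y^{(m)}\mid A=a'}$; and by the first conditional independence in (c) together with consistency, $\expect{Y^{(m)}\mid A=a'} = \expect{Y\mid A=a',M=m} = g(a',m)$. Assembling gives $\theta_\ATT(a;a') = \expect[M]{g(a',M)\mid A=a}$, as claimed.

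The main obstacle is not technical difficulty but careful bookkeeping of independencies: three conditional independences and several consistency relations must each be invoked at the right step in order to convert every interventional quantity into an observational one. Once the graphical conditions have been extracted into the explicit statements (a)--(c) and correctly ordered, both identities follow from straightforward exchanges of integration.
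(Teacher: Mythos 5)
Your derivation is correct. Note, however, that the paper does not prove \cref{prop:frontdoor-identify} at all: it is imported verbatim from Pearl's Theorem~2, whose original argument manipulates the interventional distribution $P(y\mid \hat{a})$ via the truncated factorization (do-calculus) of the discrete DAG, with no reference to potential outcomes. Your proof is therefore a genuinely different, self-contained route: you encode the front-door graph as the three counterfactual statements (a)--(c), apply composition $Y^{(a)}=Y^{(M^{(a)})}$, and reduce the inner expectation to an application of \cref{prop:backdoor-identify} with $A$ playing the role of the back-door variable for the $M\to Y$ effect. This buys two things the citation does not: it stays in the paper's own potential-outcome notation, and it actually establishes the ATT identity $\theta_\ATT(a;a')=\expect[M]{g(a',M)\mid A=a}$, which is not literally Pearl's Theorem~2 (that theorem identifies $P(y\mid\hat{a})$, hence the ATE, but the ATT variant needs the extra conditioning step you carry out). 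The price is that your steps rely on cross-world independencies such as $Y^{(m)}\indepe M^{(a)}\mid A$, which hold under the structural-equation (independent-errors) reading of \cref{fig:frontdoor} but are strictly stronger than what the do-calculus proof requires; it would be worth flagging that (a)--(c) are derived from that reading of the graph rather than from the single sentence ``$M$ blocks all causal paths from $A$ to $Y$.'' With that caveat made explicit, every step — the reduction of the law of $M^{(a)}$ to $P(M\mid A=a)$, the back-door step for $\expect{Y^{(m)}}$, and the final Fubini exchange (valid since $g$ is bounded by $R$) — is sound.
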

Unlike the case of the back-door adjustment, we cannot naively apply methods based on the ``no unmeasured confounder'' assumption here, since \cref{prop:frontdoor-identify} takes a different form to \cref{prop:causal-param-def}.

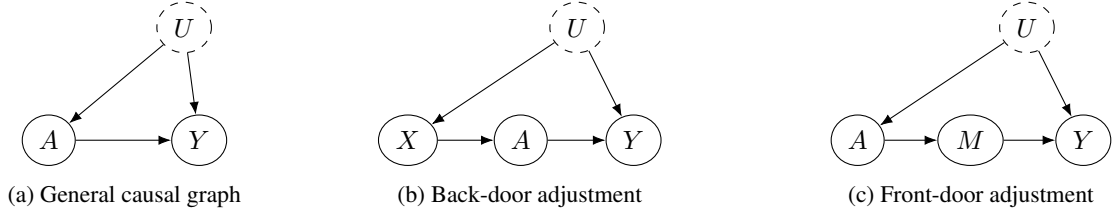
\begin{figure}
\centering
\begin{tabular}{ccc}
\begin{minipage}{0.22\textwidth}
\centering
\begin{tikzpicture}
    \node[state] (D) at (0,0) {$A$};
    \node[state] (Y) at (2.0,0) {$Y$};
    \node[dotstate] (U) at (1.8,1.5) {$U$};
    \path (U) edge (Y);
    \path (U) edge (D);
    \path (D) edge (Y);
\end{tikzpicture}
\subcaption{General causal graph}
\label{fig:general-causal}
\end{minipage}
&
\begin{minipage}{0.3\textwidth}
\centering
\begin{tikzpicture}
    \node[state] (D) at (0,0) {$A$};
    \node[state] (X) at (-1.5,0) {$X$};
    \node[dotstate] (U) at (0.75,1.5) {$U$};
    \node[state] (Y) at (1.5,0) {$Y$};
    \path (U) edge (Y);
    \path (U) edge (X);
    \path (X) edge (D);
    \path (D) edge (Y);
\end{tikzpicture}
\subcaption{Back-door adjustment}
\label{fig:backdoor}
\end{minipage}
&
\begin{minipage}{0.3\textwidth}
\centering
\begin{tikzpicture}
    \node[state] (D) at (-1.5,0) {$A$};
    \node[state] (M) at (0,0) {$M$};
    \node[state] (Y) at (1.5,0) {$Y$};
    \node[dotstate] (U) at (0.75,1.5) {$U$};
    \path (U) edge (Y);
    \path (U) edge (D);
    \path (M) edge (Y);
    \path (D) edge (M);
\end{tikzpicture}
\subcaption{Front-door adjustment}
\label{fig:frontdoor}
\end{minipage}
\end{tabular}
\caption{Causal graphs we consider in this paper. The dotted circle means the unobservable variable.}
\end{figure}
\section{Algorithms} \label{sec:algorithm}
In this section, we present our proposed methods. We first present the case with back-door adjustment and then move to front-door adjustment.

\paragraph{Back-door adjustment}
The algorithm consists of two stages; In the first stage, we learn the conditional expectation $g = \expect{Y|A=a, X=x}$ with a specific form. We then compute the causal parameter by estimating the expectation of the input features to $g$.

The conditional expectation $g(a,x)$ is learned by regressing $(A,X)$ to $Y$. Here, we consider a specific model $g(a,x) = \vec{w}^\top (\vec{\phi}_A(a) \otimes \vec{\phi}_X(x))$, where $\vec{\phi}_A: \mathcal{A}\to\mathbb{R}^{d_1}, \vec{\phi}_X: \mathcal{X}\to\mathbb{R}^{d_2}$ are feature maps represented by neural networks, $\vec{w} \in \mathbb{R}^{d_1d_2}$ is a trainable weight vector, and $\otimes$ denotes a tensor product $\vec{a} \otimes \vec{b} = \mathrm{vec}(\vec{a}\vec{b}^\top)$. Given data $\{(a_i, y_i, x_i)\}^n_{i=1} \sim P$ size of $n$, the feature maps $\vec{\phi}_A, \vec{\phi}_X$ and the weight $\vec{w}$ can be trained by minimizing the following empirical loss:
\begin{align}
\hat{\mathcal{L}}_1(\vec{w}, \vec{\phi}_A, \vec{\phi}_X) = \frac1n \sum_{i=1}^n (y_i - \vec{w}^\top (\vec{\phi}_A(a_i) \otimes \vec{\phi}_X(x_i)))^2. \label{eq:stage1}
\end{align}
We may add any regularization term to this loss, such as weight decay $\lambda\|\vec{w}\|^2$. Let the minimizer of the loss $\hat{\mathcal{L}}_1$ be $\hatvec{w}, \hatvec{\phi}_A, \hatvec{\phi}_Z = \argmin \hat{\mathcal{L}}_1$ and the learned regression function be $\hat{g}(a,x) = \hatvec{w}^\top (\hatvec{\phi}_A(a) \otimes \hatvec{\phi}_X(x))$. Then, by substituting $\hat{g}$ for $g$ in \cref{prop:backdoor-identify}, we have 
\begin{align*}
    \theta_{\ATE}(a) \simeq \hatvec{w}^\top \left(\hatvec{\phi}_A(a) \otimes \expect{\hatvec{\phi}_X(X)}\right),~ \theta_{\ATT}(a; a') \simeq \hatvec{w}^\top \left(\hatvec{\phi}_A(a) \otimes \expect{\hatvec{\phi}_X(X)\middle|A=a'}\right).
\end{align*}
This is the advantage of assuming the specific form of $g(a,x) = \vec{w}^\top (\vec{\phi}_A(a) \otimes \vec{\phi}_X(x))$; From linearity, we can recover the causal parameters by estimating $\mathbb{E}[\hatvec{\phi}_X(X)],~\mathbb{E}[\hatvec{\phi}_X(X)|A=a']$. Such (conditional) expectations of features are called \emph{(conditional) mean embedding}, and thus, we name our method ``\emph{neural (conditional) mean embedding}''.

We can estimate the marginal expectation $\mathbb{E}[\hatvec{\phi}_X(X)]$, as a simple empirical average
    $\mathbb{E}[\hatvec{\phi}_X(X)] \simeq \frac1n \sum_{i=1}^n \hatvec{\phi}_X(x_i).$

The conditional mean embedding $\mathbb{E}[\hatvec{\phi}_X(X)|A=a']$ requires more care, however: it can be learned by a technique proposed in \citet{xu2021learning}, in which we train another regression function from treatment $A$ to the back-door feature $\hatvec{\phi}_X(X)$. Specifically, we estimate $\mathbb{E}[\hatvec{\phi}_X(X)|A=a']$ by $\hatvec{f}_{\vec{\phi}_X}(a')$, where the regression function $\hatvec{f}_{\vec{\phi}_X}: \mathcal{A} \to \mathbb{R}^{d_2}$ be given by 
\begin{align}
    \hatvec{f}_{\vec{\phi}_X} = \argmin_{\vec{f}: \mathcal{A} \to \mathbb{R}^{d_2}} \hat{\mathcal{L}}_2(\vec{f}; \vec{\phi}_X), \quad \hat{\mathcal{L}}_2(\vec{f}; \vec{\phi}_X) = \frac1n \sum_{i=1}^n \|\vec{\phi}_X(x_i) - \vec{f}(a_i)\|^2. \label{eq:stage2-backdoor}
\end{align}
Here, $\|\cdot\|$ denotes the Euclidean norm. The loss $\mathcal{L}_2$ may include the additional regularization term such as a weight decay term for parameters in $\vec{f}$. We have
\begin{align*}
    \hat{\theta}_\ATE(a) = \hatvec{w}^\top \left(\hatvec{\phi}_A(a) \otimes \frac1n \sum_{i=1}^n \hatvec{\phi}_X(x_i) \right), \quad \hat{\theta}_\ATT(a; a') =\hatvec{w}^\top \left(\hatvec{\phi}_A(a) \otimes \hatvec{f}_{\hatvec{\phi}_X}(a') \right)
\end{align*}
as the final estimator for the back-door adjustment. The estimator for the ATE $\hat{\theta}_\ATE$ is reduced to the average of the predictions $\hat{\theta}_\ATE = \frac1n \sum_{i=1}^n \hat{g}(a, x_i)$. This coincides with other neural network causal methods \citep{Shalit2017,chernozhukov2022riesznet}, which do not assume $g(a,z) = \vec{w}^\top (\vec{\phi}_A(a) \otimes \vec{\phi}_X(x))$. As we have seen, however, this tensor product formulation is essential for estimating ATT by back-door adjustment. It will also be necessary for the front-door adjustment, as we will see next.

\paragraph{Front-door adjustment}
We can obtain the estimator for front-door adjustment by following the almost same procedure as the back-door adjustment. Given data $\{(a_i, y_i, m_i)\}_{i=1}^n$, we again fit the regression model $\hat{g}(a,m) = \hatvec{w}^\top \left(\hatvec{\phi}_A(a) \otimes \hatvec{\phi}_M(m)\right)$ by solving
\begin{align*}
    \hatvec{w}, \hatvec{\phi}_A,  \hatvec{\phi}_M = \argmin  \frac1n \sum_{i=1}^n (y_i - \vec{w}^\top (\vec{\phi}_A(a_i) \otimes \vec{\phi}_M(m_i)))^2,
\end{align*}
where $\vec{\phi}_M: \mathcal{M} \to \mathbb{R}^{d_2}$ is a feature map represented as the neural network. From \cref{prop:frontdoor-identify}, we have 
\begin{align*}
        &\theta_{\ATE}(a) \simeq \hatvec{w}^\top \left(\expect{\hatvec{\phi}_A(A)} \otimes \expect{\hatvec{\phi}_M(M)\middle| A=a}\right),\\
        &\theta_{\ATT}(a; a') \simeq \hatvec{w}^\top \left(\hatvec{\phi}_A(a') \otimes \expect{\hatvec{\phi}_M(M)\middle| A=a}\right).
\end{align*}
Again, we estimate feature embedding by empirical average for $\mathbb{E}[\hatvec{\phi}_A(A)]$ or solving another regression problem for $\mathbb{E}[\hatvec{\phi}_M(M)|A=a]$. The final estimator for front-door adjustment is given as \begin{align*}
    \hat{\theta}_\ATE(a) = \hatvec{w}^\top \left( \frac1n \sum_{i=1}^n \hatvec{\phi}_A(a_i)  \otimes \hatvec{f}_{\hatvec{\phi}_M}(a) \right), \quad \hat{\theta}_\ATT(a; a') =\hatvec{w}^\top \left(\hatvec{\phi}_A(a') \otimes \hatvec{f}_{\hatvec{\phi}_M}(a) \right),
\end{align*}
where $\hatvec{f}_{\hatvec{\phi}_M}$ is given by minimizing loss $\mathcal{L}_2$ (with additional regularization term) defined  as 
\begin{align*}
    \hat{\mathcal{L}}_2 =  \frac1n \sum_{i=1}^n \|\vec{\phi}_M(m_i) - \vec{f}(a_i)\|^2.
\end{align*}

\section{Theoretical Analysis} \label{sec:theoretical-analysis}

In this section, we prove the consistency of the proposed method. We focus on the back-door adjustment case, since the consistency of front-door adjustment can be derived identically. The proposed method consists of two successive regression problems. In the first stage, we learn the conditional expectation $g$, and then in the second stage, we estimate the feature embeddings. First, we show each stage's consistency, then present the overall convergence rate to the causal parameter.

\paragraph{Consistency for the first stage:} In this section, we consider the hypothesis space of $g$ as 
\begin{align*}
        \mathcal{H}_g = \{\vec{w}^\top (\vec{\phi}_A(a) \otimes \vec{\phi}_Z(z)) ~|~\vec{w} &\in \mathbb{R}^{d_1d_2},\vec{\phi}_A(a)\in \mathbb{R}^{d_1}, \vec{\phi}_Z(z)\in \mathbb{R}^{d_2}, \\
        & \|\vec{w}\|_1 \leq R,~\max_{a\in \mathcal{A}} \|\vec{\phi}_A(a)\|_\infty \leq 1,~\max_{z\in \mathcal{Z}} ~\|\vec{\phi}_Z(z)\|_\infty \leq 1\}.
\end{align*}
Here, we denote $\ell_1$-norm and infinity norm of vector $\vec{b} \in \mathbb{R}^d$ as $\|\vec{b}\|_1 = \sum_{i=1}^d |b_i|$ and $\|\vec{b}\|_\infty = \max_{i\in[d]} b_i$. Note that from inequality $\|\vec{\phi}_A(a) \otimes \vec{\phi}_Z(z)\|_\infty \leq \|\vec{\phi}_A(a)\|_\infty\|\vec{\phi}_Z(z)\|_\infty$ and H{\"o}lder’s inequality, we can show that $h(a,z) \in [-R, R]$ for all $h \in \mathcal{H}_g$. Given this hypothesis space, the following lemma bounds the deviation of estimated conditional expectation $\hat{g}$ and the true one.

\begin{lem} \label{lem:g-consist} 
Given data $S = \{a_i, y_i, x_i\}_{i=1}^n$, let minimizer of loss $\hat{\mathcal{L}}_1$ be $\hat{g} = \argmin \hat{\mathcal{L}}_1$. If the true conditional expectation $g$ is in the hypothesis space $g \in \mathcal{H}_g$, w.p. at least $1-2\delta$, we have 
\begin{align*}
    \|g - \hat{g}\|_{P(A,X)} \leq \sqrt{16R\hat{\mathfrak{R}}_S(\mathcal{H}_g) + 8R^2 \sqrt{\frac{\log 2/\delta}{2n}}},
\end{align*}
where $\hat{\mathfrak{R}}_S(\mathcal{H}_g)$ is empirical Rademacher complexity of $\mathcal{H}_g$ given data $S$.
\end{lem}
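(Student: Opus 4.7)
The plan is to treat this as a standard empirical risk minimization analysis for squared loss and apply Rademacher complexity tools. Define the population risk $\mathcal{L}(h) := \mathbb{E}[(Y - h(A,X))^2]$. Since $g(a,x) = \mathbb{E}[Y\mid A=a,X=x]$ is the pointwise minimizer of the conditional risk, conditioning on $(A,X)$ and using $\mathbb{E}[Y\mid A,X] = g(A,X)$ gives the orthogonality identity
$$\mathcal{L}(h) - \mathcal{L}(g) \;=\; \|h - g\|_{P(A,X)}^2 \qquad\text{for every measurable }h,$$
so it suffices to bound the excess risk $\mathcal{L}(\hat g) - \mathcal{L}(g)$.

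Because $g \in \mathcal{H}_g$ by assumption and $\hat g$ is the empirical minimizer, the middle bracket in the decomposition
$$\mathcal{L}(\hat g) - \mathcal{L}(g) \;=\; \bigl[\mathcal{L}(\hat g) - \hat{\mathcal{L}}_1(\hat g)\bigr] + \bigl[\hat{\mathcal{L}}_1(\hat g) - \hat{\mathcal{L}}_1(g)\bigr] + \bigl[\hat{\mathcal{L}}_1(g) - \mathcal{L}(g)\bigr]$$
is non-positive, and the remaining two brackets are each controlled by $\sup_{h \in \mathcal{H}_g} |\mathcal{L}(h) - \hat{\mathcal{L}}_1(h)|$, giving an excess-risk bound of $2\sup_{h \in \mathcal{H}_g}|\mathcal{L}(h) - \hat{\mathcal{L}}_1(h)|$.

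To bound this uniform deviation by Rademacher complexity, the crucial preliminary is boundedness of the hypothesis class. From the $\ell_1/\ell_\infty$ H\"older inequality,
$$|h(a,x)| \;\leq\; \|\vec w\|_1\,\|\vec\phi_A(a)\otimes\vec\phi_X(x)\|_\infty \;\leq\; \|\vec w\|_1\,\|\vec\phi_A(a)\|_\infty\|\vec\phi_X(x)\|_\infty \;\leq\; R,$$
so $h \in [-R,R]$ for every $h \in \mathcal{H}_g$. Combined with $Y \in [-R,R]$ this forces $(y-h(a,x))^2 \in [0,4R^2]$ and makes the loss $u \mapsto (y-u)^2$ $4R$-Lipschitz on $[-R,R]$. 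The standard symmetrization plus McDiarmid argument (with bounded differences of size $4R^2/n$) then yields, with probability at least $1-2\delta$,
$$\sup_{h \in \mathcal{H}_g} |\mathcal{L}(h) - \hat{\mathcal{L}}_1(h)| \;\leq\; 2\,\hat{\mathfrak{R}}_S(\ell \circ \mathcal{H}_g) \;+\; 4R^2\sqrt{\frac{\log(2/\delta)}{2n}},$$
where $\ell\circ\mathcal{H}_g$ is the induced loss class; Talagrand's contraction inequality with Lipschitz constant $4R$ gives $\hat{\mathfrak{R}}_S(\ell\circ\mathcal{H}_g) \leq 4R\,\hat{\mathfrak{R}}_S(\mathcal{H}_g)$.

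Chaining everything bounds the excess risk by $16R\,\hat{\mathfrak{R}}_S(\mathcal{H}_g) + 8R^2\sqrt{\log(2/\delta)/(2n)}$ with probability at least $1-2\delta$; applying the Pythagorean identity and taking a square root produces exactly the claim. The argument is almost entirely standard; the only step demanding real care is the boundedness/Lipschitz check for $\mathcal{H}_g$, because the constants it produces propagate through both the McDiarmid concentration term and the contraction factor and therefore determine the final numerical coefficients $16R$ and $8R^2$.
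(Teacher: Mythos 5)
Your proposal is correct and follows essentially the same route as the paper: the paper invokes the two-sided Rademacher generalization bound for bounded squared loss (its Proposition citing Mohri et al., Theorem 11.3) applied to both $\hat g$ and $g$, uses ERM optimality of $\hat g$, and converts the excess risk to $\|g-\hat g\|_{P(A,X)}^2$ via the same conditional-expectation orthogonality identity before taking a square root. The only difference is cosmetic --- you unpack the cited concentration result into its symmetrization/McDiarmid/contraction ingredients (correctly recovering the $4R$-Lipschitz constant and hence the coefficients $16R$ and $8R^2$), whereas the paper uses it as a black box.
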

The proof is given in \cref{sec:consist-proof}.  Here, we present the empirical Rademacher complexity when we apply a feed-forward neural network for features.
\begin{lem} \label{lem:rademacher-neuralnet-easy}
    The empirical Rademacher complexity $\hat{\mathfrak{R}}_S(\mathcal{H}_g)$ scales as
    \begin{align*}
        \hat{\mathfrak{R}}_S(\mathcal{H}_g) \leq O(C^L/\sqrt{n})
    \end{align*}
    for some constant $C$ if we use a specific $L$-layer neural net for features $\vec{\phi}_A, \vec{\phi}_X$.
\end{lem}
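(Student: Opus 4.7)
The plan is to peel off the outer $\ell_1$-bounded weights first, then reduce the remaining supremum to the Rademacher complexity of products of two scalar feature networks, and finally invoke an existing layer-wise bound on neural network Rademacher complexity.

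First I would expand $h(a,x)=\vec{w}^\top(\vec{\phi}_A(a)\otimes\vec{\phi}_X(x))=\sum_{i,j} w_{ij}\,\phi_{A,i}(a)\,\phi_{X,j}(x)$ and apply $\ell_1$--$\ell_\infty$ duality in $\vec{w}$ inside the supremum defining $\hat{\mathfrak{R}}_S(\mathcal{H}_g)$. Since $\|\vec{w}\|_1\le R$, this yields
\begin{align*}
\hat{\mathfrak{R}}_S(\mathcal{H}_g) \;\le\; R\cdot \mathbb{E}_\sigma \sup_{\vec{\phi}_A,\vec{\phi}_X}\max_{i,j}\left|\frac{1}{n}\sum_{k=1}^n \sigma_k\,\phi_{A,i}(a_k)\,\phi_{X,j}(x_k)\right|.
\end{align*}
Writing $\mathcal{N}_A$ and $\mathcal{N}_X$ for the classes of individual coordinate functions of the vector-valued feature networks (each a scalar $L$-layer net with output in $[-1,1]$ under the box constraints in $\mathcal{H}_g$), this bound is precisely $R\cdot\hat{\mathfrak{R}}_S(\mathcal{N}_A\cdot\mathcal{N}_X)$ up to an inessential absolute value, which can be removed by closing each coordinate class under negation on the last linear layer.

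Next I would apply a standard product rule for uniformly bounded Rademacher classes: if $|f|,|g|\le 1$ for $f\in\mathcal{F}$, $g\in\mathcal{G}$, then $\hat{\mathfrak{R}}_S(\mathcal{F}\cdot\mathcal{G})\lesssim \hat{\mathfrak{R}}_S(\mathcal{F})+\hat{\mathfrak{R}}_S(\mathcal{G})$, which follows from a symmetrization plus Ledoux--Talagrand contraction argument. This reduces the question to separately bounding $\hat{\mathfrak{R}}_S(\mathcal{N}_A)$ and $\hat{\mathfrak{R}}_S(\mathcal{N}_X)$.

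Finally, taking the ``specific'' architecture to be an $L$-layer feed-forward network with $1$-Lipschitz activations (e.g.\ ReLU, with a bounded output nonlinearity such as a squashing function to enforce $\|\cdot\|_\infty\le 1$) and a layer-wise weight-norm bound (Frobenius or spectral), I would invoke off-the-shelf peeling bounds such as those of Golowich--Rakhlin--Shamir or Bartlett--Foster--Telgarsky, which deliver $\hat{\mathfrak{R}}_S(\mathcal{N}_A),\hat{\mathfrak{R}}_S(\mathcal{N}_X)\le O(C^L/\sqrt{n})$, where $C$ collects the per-layer norm bound and input magnitude. Absorbing $R$ into the constant yields the stated bound. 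The main obstacle is reconciling the box constraints $\|\vec{\phi}_A\|_\infty,\|\vec{\phi}_X\|_\infty\le 1$ with the weight-norm constraints required by the peeling lemma: one must either place a bounded nonlinearity in the final layer and verify that the contraction step preserves the peeling structure, or rescale the per-layer norms so that the chosen input domain automatically forces outputs into $[-1,1]$ while still producing a product of layer bounds of order $C^L$.
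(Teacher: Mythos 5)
Your proposal is correct and follows essentially the same route as the paper: peel off the $\ell_1$-bounded outer weights (the paper does this via a convex-hull argument rather than H\"older duality, but the effect is identical up to a factor of $2$), reduce the product class $\phi_{A,i}\cdot\phi_{X,j}$ to a sum of single-network complexities, and finish with a layer-peeling bound, handling the box constraint by a bounded final nonlinearity that is removed by contraction. The only detail worth noting is that the ``standard product rule'' you invoke is realized in the paper concretely via the polarization identity $\tilde f_1\tilde f_2=\tfrac12\bigl((\tilde f_1+\tilde f_2)^2-\tilde f_1^2-\tilde f_2^2\bigr)$ together with Talagrand's contraction lemma applied to the square on a bounded range, which is exactly the symmetrization-plus-contraction mechanism you describe.
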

See \cref{lem:rademacher-neuralnet-difficult} in \cref{sec:consist-proof} for the detailed expression of the upper bound. Note that this may be of independent interest since the similar hypothesis class is considered in \citet{xu2021learning, Xu2021Deep}, and no explicit upper bound is provided on the empirical Rademacher complexity in that work.

\paragraph{Consistency for the second stage:} Next, we consider the second stage of regression. In back-door adjustment, we estimate the feature embedding $\mathbb{E}[\hatvec{\phi}_X(X)]$ and the conditional feature embedding $\mathbb{E}[\hatvec{\phi}_X(X)|A=a']$. We first state the consistency of the estimation of marginal expectation, which can be shown by Hoeffding's inequality.
\begin{lem} \label{lem:marginal-EQ}
    Given data $\{x_i\}_{i=1}^n$ and feature map $\hatvec{\phi}_X$,  w.p. at least $1-\delta$, we have 
    \begin{align*}
        \left\|\expect{\hatvec{\phi}_X(X)} - \frac1n \sum_{i=1}^n \hatvec{\phi}_X(x_i) \right\|_\infty \leq \sqrt{\frac{2\log ({2d_2}/{\delta})}{n}}.
    \end{align*}
\end{lem}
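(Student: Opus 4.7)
The plan is to apply Hoeffding's inequality coordinate-wise to the $d_2$-dimensional feature vector, then take a union bound over the coordinates.

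First, I would fix any coordinate index $j \in [d_2]$ and consider the scalar random variables $Z_i^{(j)} := \hatvec{\phi}_X(x_i)_j$. Since the lemma conditions on the feature map $\hatvec{\phi}_X$, these are i.i.d.\ samples (as the $x_i$ are i.i.d.\ from $P(X)$). The hypothesis class $\mathcal{H}_g$ was defined so that $\max_{x}\|\hatvec{\phi}_X(x)\|_\infty \le 1$, so each $Z_i^{(j)}$ takes values in $[-1,1]$. Hoeffding's inequality then yields, for every $t>0$,
\[
\Pr\!\left(\left|\frac{1}{n}\sum_{i=1}^n Z_i^{(j)} - \mathbb{E}[Z_1^{(j)}]\right| \ge t\right) \le 2\exp\!\left(-\frac{n t^2}{2}\right),
\]
where the factor $2$ in the denominator of the exponent comes from the range width $b-a=2$.

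Next, I would take a union bound across the $d_2$ coordinates: the event that the infinity-norm error exceeds $t$ implies the event that at least one coordinate error exceeds $t$, so the failure probability is at most $2 d_2 \exp(-nt^2/2)$. Setting this equal to $\delta$ and solving for $t$ gives exactly $t = \sqrt{2\log(2d_2/\delta)/n}$, which matches the stated bound. Rewriting in terms of $\|\cdot\|_\infty$ completes the argument.

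There is essentially no hard step here — the proof is a direct application of Hoeffding plus a union bound. The only subtlety worth flagging is that $\hatvec{\phi}_X$ is typically learned from the same sample $\{x_i\}$, so strictly speaking the $Z_i^{(j)}$ are not unconditionally i.i.d.\ Since the lemma is phrased as ``given feature map $\hatvec{\phi}_X$,'' I read this as a statement conditional on $\hatvec{\phi}_X$ (or equivalently, with $\hatvec{\phi}_X$ viewed as a fixed deterministic function); no sample-splitting argument is needed to obtain the stated bound. This is why the proof sketch in the paper attributes the result simply to Hoeffding's inequality.
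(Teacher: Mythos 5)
Your proof is correct and takes exactly the route the paper intends: the paper gives no explicit proof of this lemma beyond citing Hoeffding's inequality, and your coordinate-wise Hoeffding bound (using the range $[-1,1]$ from the feature truncation) followed by a union bound over the $d_2$ coordinates reproduces the stated constant $\sqrt{2\log(2d_2/\delta)/n}$ precisely. Your remark that the statement must be read conditionally on $\hatvec{\phi}_X$ (since the features are learned from the same sample) correctly identifies the one subtlety, and matches how the paper itself implicitly treats it.
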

 
For conditional feature embedding $\mathbb{E}[\hatvec{\phi}_X(X)|A=a']$, we solve the  regression problem $\hatvec{f}_{\hatvec{\phi}_X} = \argmin_{\vec{f}} \hat{\mathcal{L}}_2(\vec{f}; \hatvec{\phi}_X)$, the consistency of which is stated as follows. 

\begin{lem}\label{lem:conditional-EQ}
    Let hypothesis space $\mathcal{H}_{\vec{f}}$ be 
    \begin{align*}
        \mathcal{H}_{\vec{f}} = \{a \in \mathcal{A} \to (f_1(a), \dots, f_{d_2}(a))^\top \in [-1, 1]^{d_2}~|~ f_1, \dots, f_{d_2} \in \mathcal{H}_{f}\},
    \end{align*}
    where $\mathcal{H}_f$ is some hypothesis space of functions of $f:\mathcal{Z}\to[-1, 1]$.  Let the true function be $\vec{f}^*_{\hatvec{\phi}_X}(a) = \mathbb{E}[\hatvec{\phi}_X(X)| A=a]$, and we assume $ \vec{f}^*_{\hatvec{\phi}_X} \in \mathcal{H}_{\vec{f}}$. Let $\hatvec{f}_{\hatvec{\phi}_X} = \argmin_{\vec{f} \in \mathcal{H}_{\vec{f}}} \hat{\mathcal{L}}_2(\vec{f}; \hatvec{\phi}_X)$,  given data $S = \{(a_i,x_i)\}$ and $\vec{f}^*_{\hatvec{\phi}_X} = (f_1^*,\dots, f_{d_2}^*)^\top, \hat{\vec{f}}_{\hatvec{\phi}_X} = (\hat{f}_1,\dots, \hat{f}_{d_2})^\top$. Then, we have 
    \begin{align*}
   \max_{i \in [d_2]} \left\| f^*_i(A) - \hat{f}_i(A) \right\|_{P(A)}  \leq \sqrt{16\hat{\mathfrak{R}}_S(\mathcal{H}_f) + 8\sqrt{\frac{\log (2d_2/\delta)}{2n}}}
    \end{align*}
    w.p. at least $1-2\delta$, where  $\hat{\mathfrak{R}}_S(\mathcal{H}_f)$ is the empirical Rademacher complexity of $\mathcal{H}_f$ given data $S$.
\end{lem}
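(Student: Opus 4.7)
The plan is to reduce the vector-valued least-squares problem to $d_2$ independent scalar regressions, apply the same Rademacher-complexity argument that proves \cref{lem:g-consist} to each coordinate, and close with a union bound over $j\in[d_2]$.

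First I would expand the squared Euclidean norm to rewrite the loss as
\begin{align*}
    \hat{\mathcal{L}}_2(\vec{f};\hatvec{\phi}_X) = \sum_{j=1}^{d_2} \frac{1}{n}\sum_{i=1}^n \bigl(\hat{\phi}_{X,j}(x_i) - f_j(a_i)\bigr)^2,
\end{align*}
where $\hat{\phi}_{X,j}$ denotes the $j$-th coordinate of $\hatvec{\phi}_X$. Because $\mathcal{H}_{\vec{f}}$ is a Cartesian product of $d_2$ copies of $\mathcal{H}_f$, the minimization separates across coordinates: the $j$-th component of $\hatvec{f}_{\hatvec{\phi}_X}$ is exactly $\hat{f}_j = \argmin_{f\in\mathcal{H}_f} \frac{1}{n}\sum_{i=1}^n (\hat{\phi}_{X,j}(x_i) - f(a_i))^2$. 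The realizability assumption $\vec{f}^*_{\hatvec{\phi}_X}\in\mathcal{H}_{\vec{f}}$ forces $f^*_j(a)=\mathbb{E}[\hat{\phi}_{X,j}(X)\mid A=a]\in\mathcal{H}_f$, and the bound $\|\hatvec{\phi}_X(\cdot)\|_\infty\leq 1$ keeps the scalar targets in $[-1,1]$, so each coordinate is a bounded realizable scalar regression with effective range $R=1$.

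For each fixed $j$ I would then invoke exactly the argument behind \cref{lem:g-consist}, specialized to $R=1$ and to hypothesis class $\mathcal{H}_f$ in place of $\mathcal{H}_g$, to obtain, with probability at least $1-2\delta'$,
\begin{align*}
    \|f^*_j(A) - \hat{f}_j(A)\|_{P(A)}^2 \leq 16\,\hat{\mathfrak{R}}_S(\mathcal{H}_f) + 8\sqrt{\tfrac{\log(2/\delta')}{2n}}.
\end{align*}
Setting $\delta'=\delta/d_2$ and union bounding over $j\in[d_2]$ gives total failure probability at most $2\delta$ and replaces $\log(2/\delta')$ by $\log(2d_2/\delta)$; taking the maximum over $j$ and then a square root recovers the stated inequality.

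The main obstacle is the scalar consistency bound for realizable square-loss regression, which the proof of \cref{lem:g-consist} already supplies. Given that ingredient, the only genuinely new content here is the coordinatewise decomposition enabled by the product structure of $\mathcal{H}_{\vec{f}}$ and the $\log d_2$ price of the union bound; one minor point worth flagging is that $\hatvec{\phi}_X$ is itself learned, but since the lemma conditions on the first-stage feature map no additional independence argument is needed at this step.
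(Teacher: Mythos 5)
Your proposal is correct and follows the same route as the paper, which simply states that the proof is identical to that of \cref{lem:g-consist}. You have in fact filled in the details the paper leaves implicit—the coordinatewise separation of the argmin over the product class $\mathcal{H}_f^{d_2}$, the specialization to $R=1$, and the union bound with $\delta'=\delta/d_2$ that produces the $\log(2d_2/\delta)$ term—all of which are exactly what the stated bound requires.
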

The proof is identical to \cref{lem:g-consist}. We use neural network hypothesis class for $\mathcal{H}_f$ whose empirical Rademacher complexity is bounded by $O(1/\sqrt{n})$ as discussed in \cref{prop:original-rademacher} in \cref{sec:consist-proof}. 

\paragraph{Consistency of the causal estimator} Finally, we show that if these two estimators  converge uniformly, we can recover the true causal parameters.
To derive the consistency of the causal parameter, we put the following assumption on hypothesis spaces in order to guarantee that convergence in $\ell_2$-norm leads to uniform convergence.

\begin{assum} \label{assum:well-support}
    For functions $h_1, h_2 \in \mathcal{H}_g$, there exists constant $c > 0$ and $\beta$ that 
    \begin{align*}
         \sup_{a \in \mathcal{A}, x \in \mathcal{X}} |h_1(a,x) - h_2(a,x)| \leq \frac1c 
         \|h_1(A,X) - h_2(A,X)\|^{\frac{1}{\beta}}_{P(A,X)} .
    \end{align*}
\end{assum}

Intuitively, this ensures that we have a non-zero probability of observing all elements in $\mathcal{A} \times \mathcal{X}$. We can see that \cref{assum:well-support} is satisfied with $\beta=1$ and $c = \min_{ (a,x) \in \mathcal{A} \times \mathcal{X}} P(A=a, X=x)$ when treatment and back-door variables are discrete. A similar intuition holds for the continuous case; in \cref{sec:implication-well-support}, we show that \cref{assum:well-support} holds when with $\beta = \frac{2d + 2}{2}$ when $\mathcal{A}, \mathcal{X}$ are $d$-dimensional intervals if the density function of $P(A, X)$ is bounded away from zero and  all functions in $\mathcal{H}_g$ are Lipschitz continuous.

\begin{thm} \label{thm:main}
        Under conditions in \cref{lem:g-consist,lem:rademacher-neuralnet-easy,lem:marginal-EQ} and \cref{assum:well-support}, w.p. at least $1-4\delta$, we have 
    \begin{align*}
        \sup_{a\in\mathcal{A}}|\theta_\ATE(a) - \hat{\theta}_\ATE(a)| &\leq O\left(n^{-\frac{1}{4\beta}}\right).
    \end{align*}
    If we furthermore assume that for all $\vec{f}=(f_1,\dots,f_{d_2})^\top$ and $\tilde{\vec{f}}=(\tilde{f}_1,\dots,\tilde{f}_{d_2})^\top$ in $\mathcal{H}_{\vec{f}}$ defined in \cref{lem:conditional-EQ}, 
    \begin{align*}
    \sup_{a \in \mathcal{A}} \|\vec{f}(a) - \tilde{\vec{f}}(a)\|_\infty \leq \frac1{c'} \left(\max_{i \in [d_2]} \left\| f_i(A) - \tilde{f}_i(A) \right\|_{P(A)}\right)^{\frac{1}{\beta'}},
    \end{align*}
    then, w.p. at least $1-4\delta$, we have
    \begin{align*}
        \sup_{a,a'\in\mathcal{A}}|\theta_\ATT(a; a') - \hat{\theta}_\ATT(a; a')| &\leq O\left(n^{-\frac{1}{4\beta}} + n^{-\frac{1}{4\beta'}}\right).
    \end{align*}
\end{thm}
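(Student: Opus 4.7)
The plan is to insert an intermediate ``plug-in'' quantity, obtained by substituting the learned $\hat{g}$ into the identification formulas of \cref{prop:backdoor-identify}, and then bound the two resulting pieces via the stage-one and stage-two consistency lemmas.

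For the ATE, I would write
\begin{align*}
\theta_\ATE(a)-\hat{\theta}_\ATE(a)
  = \bigl(\expect[X]{g(a,X)}-\expect[X]{\hat{g}(a,X)}\bigr)
   + \Bigl(\expect[X]{\hat{g}(a,X)}-\hat{\theta}_\ATE(a)\Bigr).
\end{align*}
By the tensor-product form of $\hat{g}$, the second bracket equals $\hatvec{w}^\top\bigl(\hatvec{\phi}_A(a)\otimes(\expect{\hatvec{\phi}_X(X)}-\tfrac{1}{n}\sum_{i=1}^{n}\hatvec{\phi}_X(x_i))\bigr)$. Bounding this with H\"older's inequality, the identity $\|\vec{a}\otimes\vec{b}\|_\infty=\|\vec{a}\|_\infty\|\vec{b}\|_\infty$, and the constraints $\|\hatvec{w}\|_1\leq R$ and $\max_a\|\hatvec{\phi}_A(a)\|_\infty\leq 1$ built into $\mathcal{H}_g$ gives at most $R\,\|\expect{\hatvec{\phi}_X(X)}-\tfrac{1}{n}\sum_{i=1}^{n}\hatvec{\phi}_X(x_i)\|_\infty$, which by \cref{lem:marginal-EQ} is $O(\sqrt{\log(d_2/\delta)/n})$, uniformly in $a$.

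For the first bracket I would pass to the supremum, $|\expect[X]{g(a,X)-\hat{g}(a,X)}|\leq \sup_{a',x}|g(a',x)-\hat{g}(a',x)|$, and invoke \cref{assum:well-support} with $h_1=g$ and $h_2=\hat{g}$ (both lie in $\mathcal{H}_g$) to convert this uniform error into $c^{-1}\|g-\hat{g}\|_{P(A,X)}^{1/\beta}$. Chaining \cref{lem:g-consist} with \cref{lem:rademacher-neuralnet-easy} then yields $\|g-\hat{g}\|_{P(A,X)}=O(n^{-1/4})$, whence this bracket contributes $O(n^{-1/(4\beta)})$. Combining the two brackets and taking a union bound over the failure events of \cref{lem:g-consist} ($2\delta$) and \cref{lem:marginal-EQ} ($\delta$), both absorbed into the stated $4\delta$ slack, proves the ATE claim.

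For the ATT the decomposition is analogous, but with intermediate quantity $\hatvec{w}^\top\bigl(\hatvec{\phi}_A(a)\otimes\vec{f}^*_{\hatvec{\phi}_X}(a')\bigr)$ where $\vec{f}^*_{\hatvec{\phi}_X}(a)=\expect{\hatvec{\phi}_X(X)\mid A=a}$. The first bracket is again at most $\sup_{a,x}|g(a,x)-\hat{g}(a,x)|=O(n^{-1/(4\beta)})$ since conditional expectation is an $L^1$-contraction. The second bracket equals $\hatvec{w}^\top(\hatvec{\phi}_A(a)\otimes(\vec{f}^*_{\hatvec{\phi}_X}(a')-\hatvec{f}_{\hatvec{\phi}_X}(a')))$ and is at most $R\,\|\vec{f}^*_{\hatvec{\phi}_X}(a')-\hatvec{f}_{\hatvec{\phi}_X}(a')\|_\infty$ by the same H\"older step. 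The additional Lipschitz-type hypothesis then converts the coordinatewise $L^2(P(A))$ rate of \cref{lem:conditional-EQ} (which is $O(n^{-1/4})$ after inserting the neural-net bound $\hat{\mathfrak{R}}_S(\mathcal{H}_f)=O(n^{-1/2})$) into a uniform $\ell_\infty$ rate of $O(n^{-1/(4\beta')})$. A union bound over \cref{lem:g-consist} ($2\delta$) and \cref{lem:conditional-EQ} ($2\delta$) exhausts exactly the $4\delta$ slack.

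I expect the main obstacle to be the $L^2\!\to\!L^\infty$ conversion: both consistency lemmas only control mean-squared error in their native norms, whereas \cref{thm:main} demands a uniform supremum over $(a,a')$. \cref{assum:well-support} and its analogue on $\mathcal{H}_{\vec{f}}$ are precisely the bridge, and they are what force the slower exponents $1/(4\beta)$ and $1/(4\beta')$ in place of the stage-wise $n^{-1/4}$. Everything else is routine bookkeeping around the tensor-product H\"older bound, substitution of the Rademacher rates, and the union bound.
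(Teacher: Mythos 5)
Your proposal is correct and follows essentially the same route as the paper's proof: the same triangle-inequality decomposition through the plug-in quantity $\expect[X]{\hat{g}(a,X)}$, the same H\"older/tensor-product bound with $\|\hatvec{w}\|_1\leq R$ and $\|\hatvec{\phi}_A(a)\|_\infty\leq 1$ for the embedding error, and the same use of \cref{assum:well-support} (and its $\mathcal{H}_{\vec{f}}$ analogue) to convert the $L^2$ rates of \cref{lem:g-consist,lem:conditional-EQ} into uniform bounds. The bookkeeping of the failure probabilities also matches the paper.
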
   

The proof is given in \cref{sec:consist-proof}. This rate is slow compared to the existing work \citep{Rahul2020Kernel}, which can be as fast as $O(n^{-1/4})$. However, \citet{Rahul2020Kernel} assumes that the correct regression function $g$ is in a certain \emph{reproducing kernel Hilbert space (RKHS)}, which is a stronger assumption than ours, which only assumes a Lipschitz hypothesis space. Deriving the matching minimax  rates under the Lipschitz assumption remains a topic for future work.

\section{Related Work} \label{sec:related-work}
Meanwhile learning approaches to the back-door adjustment have been extensively explored in recent work, including tree models \citep{Hill2017,Athey2019Causal}, kernel models \citep{Rahul2020Kernel} and neural networks \citep{Shi2019,chernozhukov2022riesznet,Shalit2017}, most literature considers binary treatment cases, and few methods can be applied to continuous treatments.  \citet{Schwab2020LearningCR} proposed to discretize the continuous treatments and \citet{kennedy2017nonparametric,colangelo2020double} conducted density estimation of $P(X)$ and $P(X|A)$. These are simple to implement but suffer from the curse of dimensionality \citep[Section 6.5]{Wasswerman2006}.

Recently, the automatic debiased machine learner (Auto-DML) approach \citep{Chernozhukov2022AutomaticDM} has gained increasing attention, and can handle continuous treatments in the back-door adjustment. Consider a functional $m$ that maps $g$ to causal parameter $\theta = \expect{m(g,(A,X))}$. For the ATE case, we have $m(g,(A,X)) = g(a,X)$ since $\theta_\ATE(a) = \expect{g(a,X)}$. We may estimate both $g$ and the Reisz representer $\alpha$ that satisfies $\expect{m(g,(A,X))} = \expect{\alpha(A,X)g(A,X)}$ by the least-square regression to get the causal estimator. Although Auto-DML can learn a complex causal relationship with neural network model \citep{chernozhukov2022riesznet}, it requires a considerable amount of  computation when the treatment is continuous, since we have to learn a different Reisz representer $\alpha$ for each treatment $a$. Furthermore, as discussed in \cref{sec:exponential-smoothness}, the error bound on $\alpha$ can grow exponentially with respect to the dimension of the probability space, which may harm performance in high-dimensional settings.

\citet{Rahul2020Kernel} proposed a feature embedding approach, in which feature maps are specified as the fixed feature maps in a reproducing kernel Hilbert space (RKHS). Although this strategy can be applied to a number of different causal parameters, the flexibility of the model is limited since it uses pre-specified features. Our main contribution is to generalize this feature embedding approach to adaptive features which enables us to capture more complex causal relationships. Similar techniques are used in the additional causal inference settings, such as deep feature instrumental variable method \citep{xu2021learning} or deep proxy causal learning \citep{Xu2021Deep}. 

By contrast with the back-door case, there is little literature that discusses non-linear front-door adjustment. The idea was originally introduced for the discrete treatment setting \citep{Pearl1995DAG} and was later discussed using the linear causal model \citep{Pearl09}. To the best of our knowledge, \citet{Rahul2020Kernel} is the only work that considers the nonlinear front-door adjustment, where fixed kernel feature dictionaries are used. We generalize this approach using adaptive neural feature dictionaries and obtain promising performance.

\section{Experiments} \label{sec:experiment}
In this section, we evaluate the performance of the proposed method based on two benchmark datasets. One is a semi-synthetic dataset based on
the IHDP dataset \citep{IHDP}, which is widely used for benchmarking back-door adjustment methods with binary treatment. Another is a synthetic dataset based on dSprite image dataset \citep{dsprites17} to test the performance of high-dimensional treatment or covariates.  We first describe the training procedure we apply for our proposed method, and then report the results of each benchmark.

\subsection{Training Procedure}
During the training, we use the learning procedure proposed by \citet{xu2021learning}. Let us consider the first stage regression in a back-door adjustment, in which we consider the following loss $\hat{\mathcal{L}}_1$ with weight decay regularization
\begin{align*}
    \hat{\mathcal{L}}_1(\vec{w}, \vec{\phi}_A, \vec{\phi}_X) = \frac1n \sum_{i=1}^n (y_i - \vec{w}^\top (\vec{\phi}_A(a_i) \otimes \vec{\phi}_X(x_i)))^2 + \lambda\|\vec{w}\|^2.
\end{align*}
To minimize $\hat{\mathcal{L}}_1$ with respect to $(\vec{w}, \vec{\phi}_A, \vec{\phi}_X)$, we can use the closed form solution of weight $\vec{w}$. If we fix features $\vec{\phi}_A, \vec{\phi}_X$, the minimizer of $\vec{w}$ can be written
\begin{align*}
    \hat{\vec{w}}(\vec{\phi}_A, \vec{\phi}_X) = \left(\frac1n \sum_{i=1}^n (\vec{\phi}_{A,X}(a_i, x_i))(\vec{\phi}_{A,X}(a_i, x_i))^\top +\lambda I\right)^{-1} \frac1n \sum_{i=1}^n y_i\vec{\phi}_{A,X}(a_i, x_i),
\end{align*}
where $\vec{\phi}_{A,X}(a,x) = \vec{\phi}_{A}(a) \otimes \vec{\phi}_{Z}(a)$. Then, we optimize the features as 
\begin{align*}
    \hatvec{\phi}_A, \hatvec{\phi}_X = \argmin_{\vec{\phi}_A, \vec{\phi}_X}   \hat{\mathcal{L}}_1\left(\hatvec{w}(\vec{\phi}_A, \vec{\phi}_X), \vec{\phi}_A, \vec{\phi}_X\right)
\end{align*}
using Adam \citep{Kingma2015Adam}. We empirically found that this stabilizes the learning and improves the performance of the proposed method.

\subsection{IHDP Dataset}
The IHDP dataset is widely used to evaluate the performance of the estimators for the ATE \citep{Shi2019,chernozhukov2022riesznet,Athey2019Causal}. This is a semi-synthetic dataset based on
the Infant Health and Development Program (IHDP) \citep{IHDP}, which studies the effect of home visits and attendance at specialized clinics on future developmental and health outcomes for low birth weight premature infants. Following existing work, we generate outcomes and binary treatments based on the 25-dimensional observable confounder in the original data. Each dataset consists of 747 observations, and the performance in 1000 datasets is summarized in \cref{tab:IHDP-res}.
\begin{table}[t]
    \centering
        \caption{Mean Absolute Error (MAE) of ATE 
and its standard error over 1000 semi-synthetic datasets based on
the IHDP experiment.}
\label{tab:IHDP-res}
    \begin{tabular}{c|c}
         &  MAE $\pm$ std. err.\\\hline
        DragonNet & 0.146 $\pm$ 0.010 \\ \hline
        ReiszNet(Direct) & 0.123 $\pm$ 0.004 \\
        ReiszNet(IPW) &  0.122 $\pm$ 0.037 \\
        ReiszNet(DR) &   0.110 $\pm$ 0.003 \\\hline
        RKHS Embedding & 0.166  $\pm$ 0.003 \\
        \textbf{NN Embedding (Proposed)} & \textbf{0.117 $\pm$ 0.002} \\ \hline
    \end{tabular}
\end{table} 
We compare our method to competing causal methods, DragonNet \citep{Shi2019}, ReiszNet \citep{chernozhukov2022riesznet}, and RKHS Embedding \citep{Rahul2020Kernel}. DragonNet is a neural causal inference method specially designed for the binary treatment, which applies the targeted regularization \citep{vanderLaanRubin+2006} to ATE estimation. ReiszNet implements Auto-DML with a neural network, which learns the conditional expectation $g$ and Reisz representer $\alpha$ jointly while sharing the intermediate features. Given estimated $\hat{g}, \hat{\alpha}$, it proposes three ways to calculate the causal parameter;
\begin{align*}
    \mathrm{Direct:} ~\expect{m(\hat{g},(A,X))}, \mathrm{IPW:} ~\expect{Y\hat{\alpha}(A,X)},
     \mathrm{DR:}~ \expect{m(\hat{g},(A,X))\!+\!\hat{\alpha}(A,X)(Y\!-\!\hat{g}(A,X))},
\end{align*}
where functional $m$  maps $g$ to the causal parameter (See \cref{sec:related-work} for the example of functional $m$). We report the performance of each estimator in RieszNet. RKHS Embedding employs the feature embedding approach with a fixed kernel feature
dictionaries. From \cref{tab:IHDP-res}, we can see that proposed method outperforms all competing methods besides ReiszNet(DR), for which the performance is comparable (0.117 $\pm$ 0.002 v.s. 0.110 $\pm$ 0.003).

\subsection{dSprite Dataset}
To test the performance of our method of causal inference in a more complex setting, we used dSprite data \citep{dsprites17}, which is also used as the benchmark for other high-dimensional causal inference methods \citep{xu2021learning,Xu2021Deep}. The dSprite dataset consists of images that are 64 $\times$ 64 = 4096-dimensional, described by five latent parameters ($\mathtt{shape}$, $\mathtt{scale}$, $\mathtt{rotation}$, $\mathtt{posX}$ and $\mathtt{posY}$). Throughout this paper, we fix ($\mathtt{shape}$, $\mathtt{scale}$, $\mathtt{rotation}$) and use $\mathtt{posX} \in [0, 1]$ and  $\mathtt{posY} \in [0, 1]$ as the latent parameters. Based on this dataset, we propose two experiments; one is ATE estimation based on the back-door adjustment, and the other is ATT estimation based on front-door adjustment.

\paragraph{Back-door Adjustment}
In our back-door adjustment experiment, we consider the case where the image is the treatment. Let us sample hidden confounders $U_1, U_2 \sim \mathrm{Unif}(-1, 1)$, and consider the back-door as the noisy version of the confounder $X_1 = U_1 + \varepsilon_1, X_2 = U_2 + \varepsilon_2$ where $\varepsilon_1, \varepsilon_2 \sim \mathcal{N}(0, 0.09)$. We define treatment $A$ as the image, where the parameters are set as $\mathtt{posX} = (X_1 + 1.5) / 3.0, \mathtt{posY} = (X_2 + 1.5) / 3.0$. We add Gaussian noise $\mathcal{N}(0, 0.01)$ to each pixel of images. The outcome is given as follows,
\begin{align*}
    Y = \frac{h^2(A)}{100} + (U_1 + U_2) + \varepsilon_Y, \quad h(A) = \sum_{i,j=1}^{64} \frac{(i-1)}{64} \frac{(j-1)}{64} A_{[ij]}
\end{align*}
where $A_{[ij]}$ denotes the value of the pixel at $(i,j)$ and $\varepsilon_Y$ is the noise variable sampled from $\varepsilon_Y \sim \mathcal{N}(0, 0.25)$.  Each dataset consists of 5000 samples of $(Y,A,X_1,X_2)$ and we consider the problem of estimating $\theta_\ATE(a) = h^2(A)/100$. We compare the proposed method to RieszNet and RKHS Embedding, since DragonNet is designed for binary treatments and is not applicable here. We generate 10 datasets and the average of squared error $(\theta_\ATE(a) - \hat{\theta}_\ATE(a))^2$ at 9 test points $a$ is reported in \cref{fig:dsprite-ate}.

\begin{figure}
    \centering
    \begin{minipage}{0.49\textwidth}
    \includegraphics[width=\linewidth]{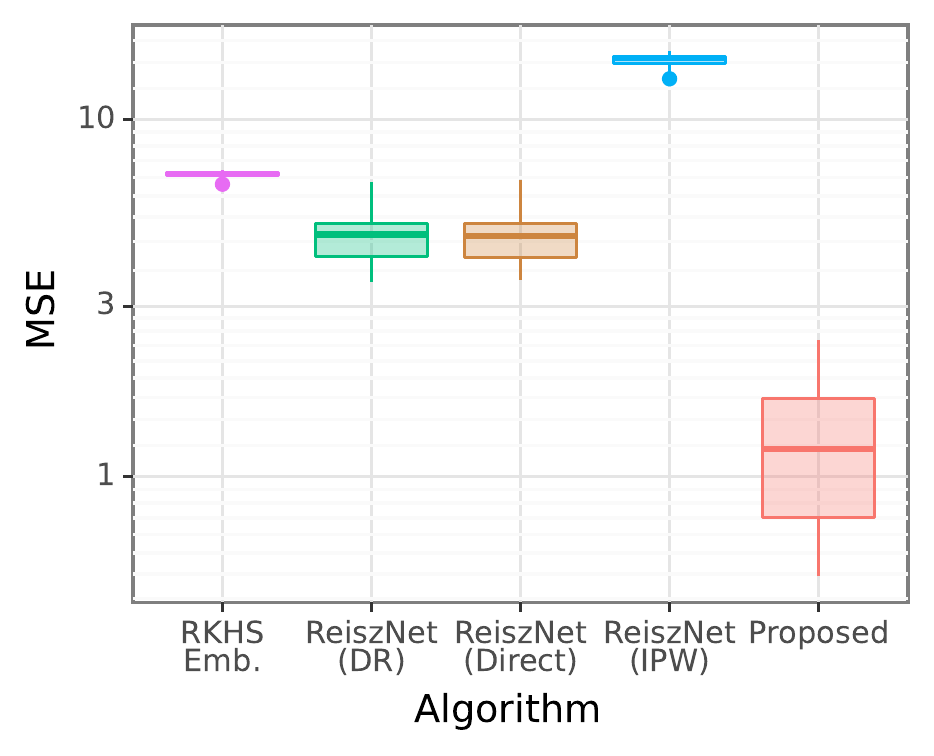}
    \caption{ATE experiment based on dSprite data}
    \label{fig:dsprite-ate}
    \end{minipage}
    \begin{minipage}{0.49\textwidth}
    \includegraphics[width=\linewidth]{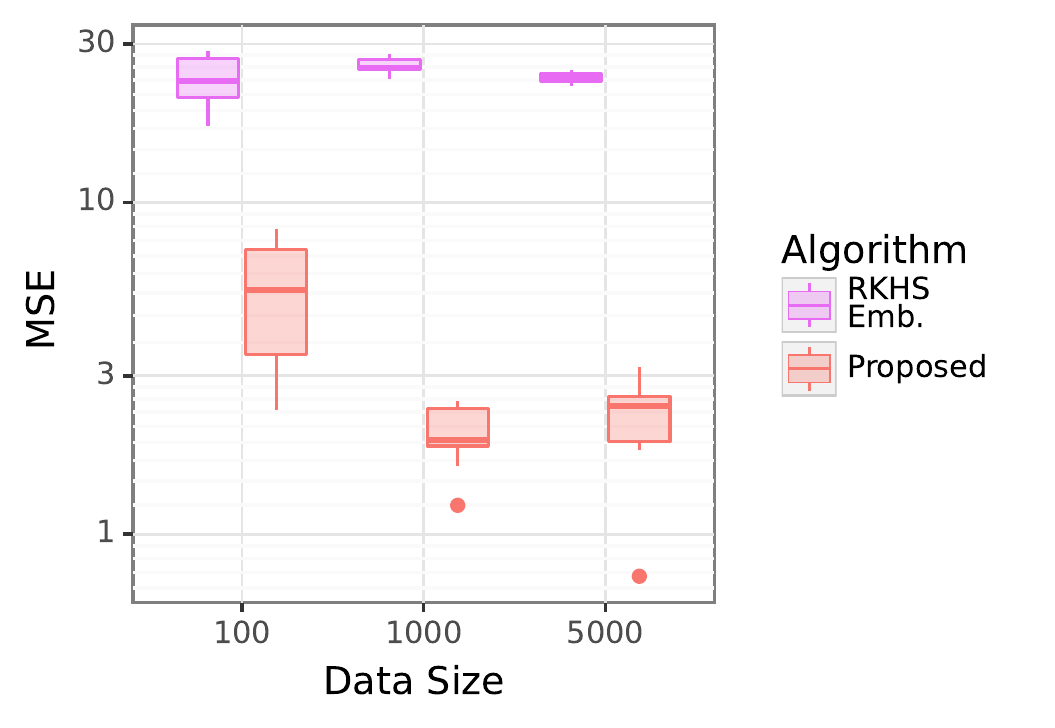}
    \caption{ATT experiment based on dSprite data}
    \label{fig:dsprite-att}
    \end{minipage}
\end{figure}

We can see that the proposed method performs best in the setting, which shows the power of the method for complex high-dimensional inputs. The RKHS Embedding method suffers from the limited flexibility of the model for the case of complex high-dimensional treatment, and performs worse than all neural methods besides RieszNet(IPW). This suggests that it is difficult to estimate Riesz representer $\alpha$ in a high-dimensional scenario, which is also suggested by the exponential growth of the error bound to the dimension as discussed in \cref{sec:exponential-smoothness}. We conjecture this also harms the performance of  RieszNet(Direct) and RieszNet(DR), since the models for conditional expectation $\hat{g}$ and Riesz representer $\hat{\alpha}$ share the intermediate features in the network and are jointly trained in RieszNet.

\paragraph{Frontdoor Adjustment} We use dSprite dataset to consider front-door adjustment. Again, we sample hidden confounder $U_1, U_2 \sim \mathrm{Unif}(-1.5, 1.5)$, and we set the image to be the treatment, where the parameters
 are set as $\mathtt{posX} = (X_1 + 1.5) / 3.0, \mathtt{posY} = (X_2 + 1.5) / 3.0$. We add Gaussian noise $\mathcal{N}(0, 0.01)$ to each pixel of the images. We use $M = h(A) + \varepsilon_M$  as the front-door variable $M$, where $ \varepsilon_M\sim\mathcal{N}(0, 0.04)$. The outcome is given as follows,
\begin{align*}
    Y = \frac{M^2}{100} + 5(U_1 + U_2) + \varepsilon_Y, \quad \varepsilon_Y \sim \mathcal{N}(0, 0.25)
\end{align*}
We consider the problem of estimating $\theta_\ATT(a; a')$ and obtain the average squared error on 121 points of $a$ while fixing $a'$ to the image of $\mathtt{posX}=0.6, \mathtt{posY}=0.6$. We compare against RKHS  Embedding, where the result is given in \cref{fig:dsprite-att}. Note that RieszNet has not been developed for this setting.
Again, the RKHS Embedding method suffers from the limited flexibility of the model, whereas our proposed model successfully captures the complex causal relationships.

\section{Conclusion}
We have proposed a novel method for back-door and front-door adjustment, based on the neural mean embedding. We established consistency of the proposed method based on a Rademacher complexity argument, which contains a new analysis of the hypothesis space with the tensor product features. Our empirical evaluation shows that the proposed method outperforms  existing estimators, especially when high-dimensional image observations are involved.

As future work, it would be promising to apply a similar adaptive feature embedding approach to other causal parameters, such as \emph{marginal average effect} $\nabla_a \theta_\ATE(a)$ \citep{Imbens2009}. Furthermore, it would be interesting to consider sequential treatments, as in \emph{dynamic treatment effect} estimation, in which the treatment may depend on the past covariates, treatments and outcomes. Recently, a kernel feature embedding approach \citep{2111.03950} has been developed to estimate the dynamic treatment effect, and we expect that applying the neural mean embedding would benefit the performance.

\bibliography{reference}

\newpage
\appendix

\section{Technical Details}

\subsection{Implication of \cref{assum:well-support}}
\label{sec:implication-well-support}
In this section, we discuss the implication of  \cref{assum:well-support}, especially when the back-door and treatment variables are continuous. First, we show the upper bound of the sup norm of Lipschitz function.

\begin{lem}
    Let $Z \in \mathcal{Z}$ be the probability variable following $P(Z)$ and $\mathcal{Z} \subset [0, 1]^d$. Then, for all $L$-Lipschitz function $h$ bounded in $h(z) \in [-R, R]$, we have 
    \begin{align*}
        \max_{z \in \mathcal{Z}} |h(z)| \leq \left(\frac{4}{c}\right)^{\frac{1}{d+2}}(2R + 2\sqrt{d}L)^\frac{d}{d+2} \|h\|^{\frac{2}{d+2}}_{P(Z)}
    \end{align*}
    if the density function $f(z)$ is bounded away from zero $f(z) \geq \varepsilon > 0$.
\end{lem}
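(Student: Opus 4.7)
The plan is to reduce bounding $\sup_{z\in\mathcal{Z}}|h(z)|$ to bounding the $L^2$ mass of $h$ on a small neighborhood of its maximizer, and then convert the pointwise lower bound on the density into a volume lower bound.

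First I would pick $z^\star \in \mathcal{Z}$ achieving (or approaching) $M := \sup_{z \in \mathcal{Z}}|h(z)|$, and assume without loss of generality that $h(z^\star) = M > 0$. Using that $h$ is $L$-Lipschitz in Euclidean norm, for every $z$ with $\|z - z^\star\|_\infty \leq s$ we have $\|z - z^\star\|_2 \leq s\sqrt{d}$ and hence $h(z) \geq M - sL\sqrt{d}$. Choosing $s := M/(2L\sqrt{d})$ forces $h(z) \geq M/2$ on the $\ell_\infty$-cube $B_s$ of half-side $s$ around $z^\star$. This is the key use of the $\sqrt{d}$ factor which appears in the bound.

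Next I would estimate the mass of $B_s \cap \mathcal{Z}$. Since $z^\star \in \mathcal{Z}\subset[0,1]^d$, in the worst case $z^\star$ is a corner and only one orthant of $B_s$ lies inside $[0,1]^d$; still, as long as $s \leq 1$ this orthant has volume at least $s^d$. Assuming $\mathcal{Z}$ equals (or has full measure inside) $[0,1]^d$ on this orthant, I can then use the density lower bound to write
\[
\|h\|^2_{P(Z)} \;\geq\; \int_{B_s \cap \mathcal{Z}} h^2(z)\, f(z)\, dz \;\geq\; \Bigl(\tfrac{M}{2}\Bigr)^{\!2}\cdot \varepsilon \cdot s^d.
\]

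The argument then splits into two cases depending on whether the optimal radius $s = M/(2L\sqrt{d})$ fits inside $[0,1]^d$. If $s \leq 1$, substituting $s$ gives $\|h\|^2_{P(Z)} \geq \varepsilon M^{d+2} / \bigl(4(2\sqrt{d}L)^d\bigr)$, which is already stronger than the claim because $2\sqrt{d}L \leq 2R + 2\sqrt{d}L$. If $s > 1$, i.e.\ $M > 2\sqrt{d}L$, I instead take $s = 1$: the Lipschitz deficit over $B_1$ is $L\sqrt{d} \leq M/2$, so $h \geq M/2$ on all of $\mathcal{Z}$ and $\|h\|^2_{P(Z)} \geq \varepsilon M^2/4$; combining with $M \leq R \leq 2R + 2\sqrt{d}L$ lets me pad $M^2$ to $M^{d+2}/(2R+2\sqrt{d}L)^d$, producing the same inequality. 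In both cases I get $M^{d+2} \leq (4/\varepsilon)(2R+2\sqrt{d}L)^d\,\|h\|^2_{P(Z)}$, and taking the $(d+2)$-th root yields the statement (with $c=\varepsilon$).

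The one step I expect to require care is the geometric estimate $\mathrm{vol}(B_s\cap\mathcal{Z}) \gtrsim s^d$: if $\mathcal{Z}$ is a badly-shaped proper subset of $[0,1]^d$ (say a thin slab touching $z^\star$ only tangentially), the worst-case orthant argument breaks. The lemma implicitly relies on $\mathcal{Z}=[0,1]^d$ or, more generally, on the support of $P(Z)$ having positive reach at every point, which seems to be the intended reading given that the density is assumed bounded below on $\mathcal{Z}$; otherwise one pays only an additional constant $2^{-d}$ that gets absorbed into the constant of the final bound.
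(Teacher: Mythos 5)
Your proposal is correct and follows essentially the same argument as the paper: localize around the maximizer $z^\star$, use Lipschitz continuity to show $|h|\ge M/2$ on a cube whose side is proportional to $M$, lower-bound that cube's probability mass via the density bound, and rearrange. The only difference is that the paper picks the half-width $M/(2R+2\sqrt{d}L)$, which is automatically at most $1/2$ and so avoids your case split on whether $M/(2\sqrt{d}L)$ exceeds $1$; your caveat about $\mathcal{Z}$ being a badly-shaped proper subset of $[0,1]^d$ is a fair observation, but the paper's proof makes the same implicit full-cube assumption.
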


\begin{proof}
    Since $\mathcal{Z}$ is compact, there exists $z^*$ such that
    \begin{align*}
        |h(z^*)| = \max_{z \in\mathcal{Z}} |h(z)|.
    \end{align*}
    Let $M = |h(z^*)|$ and we consider the following rectangle
    \begin{align*}
        \mathfrak{B} = \left\{z \in \mathcal{Z} ~ \middle| ~ \forall i \in [d]~ \max\left(0, z^*_{[i]} - \frac{M}{2R + 2\sqrt{d}L}\right) \leq z_{[i]} \leq \min\left(1, z^*_{[i]} + \frac{M}{2R + 2\sqrt{d}L}\right)\right\},
    \end{align*}
    where $z_{[i]}$ denotes the $i$-th element of $z$. Then, from Lipschitz continuity, for all $z \in \mathfrak{B}$, we have 
    \begin{align*}
        |h(z)| &\geq |h(z^*)| - L\|z^* - z\|_2\\
        &= M - L\sqrt{\sum_{i=1}^d |z^*_{[i]} - z_{[i]}|^2}\\
        & \ge M - L\sqrt{\sum_{i=1}^d \left(\frac{M}{2R + 2\sqrt{d}L}\right)^2} \\
        & \geq  M - L\sqrt{\sum_{i=1}^d \left(\frac{M}{2\sqrt{d}L}\right)^2} \geq M / 2
    \end{align*}
    Now, consider the volume of $\mathfrak{B}$. Since 
    \begin{align*}
        \frac{M}{2R + 2\sqrt{d}L} \leq \frac{R}{2R + 2\sqrt{d}L} \leq \frac{R}{2R} = \frac12,
    \end{align*}
    the events $0 \geq z^*_{[i]} -  \frac{M}{2R + 2\sqrt{d}L}$ and $1 \leq z^*_{[i]} +  \frac{M}{2R + 2\sqrt{d}L}$ do not occur simultaneously. Therefore, we have 
    \begin{align*}
        \min\left(1, z^*_{[i]} + \frac{M}{2R + 2\sqrt{d}L}\right) - \max\left(0, z^*_{[i]} - \frac{M}{2R + 2\sqrt{d}L}\right) \geq \frac{M}{2R + 2\sqrt{d}L},
    \end{align*}
    and 
    \begin{align*}
        \|h\|^2_{P(Z)} &= \int_{\mathcal{Z}} |h(z)|^2 f(z) \intd z\\
        &\geq \int_{\mathfrak{B}} |h(z)|^2 f(z) \intd z\\
        &\geq c\left(\frac{M}{2R + 2\sqrt{d}L}\right)^d \frac{M^2}{4}.
    \end{align*}
    Since $M = \max_{z \in\mathcal{Z}} |h(z)|$, we have 
    \begin{align*}
        \max_{z \in\mathcal{Z}} |h(z)| \leq \left(\frac{4}{c}\right)^{\frac{1}{d+2}}(2R + 2\sqrt{d}L)^\frac{d}{d+2} \|h\|^{\frac{2}{d+2}}_{P(Z)}.
    \end{align*}
\end{proof}

By this, we can give the \cref{assum:well-support} follows for the interval probability space.
\begin{corollary}
If $\mathcal{A} = [0, 1]^{d_A}, \mathcal{X} = [0, 1]^{d_X}$, and all function $h \in \mathcal{H}_g$ are $L-$Lipschitz continuous, we have 
\begin{align*}
    \max_{a,x \in\mathcal{A} \times \mathcal{X}} |h_1(a,x) - h_2(a,x)| \leq C \|h_1 - h_2\|^{\frac{2}{d_A + d_X+2}}_{P(A,X)},
\end{align*}
where $C = \left(\frac{4}{c}\right)^{\frac{1}{d_A + d_X +2}}(4R + 4\sqrt{d_A + d_X}L)^\frac{d_A + d_X}{d_A + d_X+2}$.
\end{corollary}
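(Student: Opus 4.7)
The plan is to derive the corollary as an immediate application of the preceding lemma (the Lipschitz sup-norm/$L_2$-norm comparison on $[0,1]^d$ with density bounded away from zero) to the difference function $h := h_1 - h_2$. The setup matches cleanly: the joint space $\mathcal{A}\times\mathcal{X} = [0,1]^{d_A+d_X}$ is a unit cube of dimension $d = d_A + d_X$, and by hypothesis the density of $P(A,X)$ is bounded below by the constant $c$ that appears in the lemma.

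The key step is to track the boundedness and Lipschitz constants for $h$. Since $h_1, h_2 \in \mathcal{H}_g$, the definition of $\mathcal{H}_g$ from the consistency section gives $h_1, h_2 \in [-R,R]$, so $h \in [-2R, 2R]$; similarly, both are assumed $L$-Lipschitz, so $h$ is $2L$-Lipschitz by the triangle inequality. I would then apply the lemma with $Z = (A,X)$, $d = d_A+d_X$, and with $(R,L)$ replaced by $(2R, 2L)$. This produces
\begin{equation*}
\max_{(a,x)\in \mathcal{A}\times\mathcal{X}} |h(a,x)| \leq \left(\tfrac{4}{c}\right)^{\frac{1}{d_A+d_X+2}} \bigl(2(2R) + 2\sqrt{d_A+d_X}\,(2L)\bigr)^{\frac{d_A+d_X}{d_A+d_X+2}} \|h\|_{P(A,X)}^{\frac{2}{d_A+d_X+2}}.
\end{equation*}
Simplifying $2(2R) + 2\sqrt{d_A+d_X}(2L) = 4R + 4\sqrt{d_A+d_X}\,L$ yields exactly the stated constant $C$, and substituting back $h = h_1 - h_2$ gives the claimed inequality.

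There is no substantive obstacle: the whole content sits in the previous lemma, and the corollary is just a change of variables and a doubling of constants. The only thing to be careful about is not forgetting the factor of $2$ from the triangle inequality in both the uniform bound and the Lipschitz constant, since the stated constant $C$ only matches if those doublings are tracked correctly. One could optionally remark that this is precisely \cref{assum:well-support} with $\beta = (d_A + d_X + 2)/2$ and the constant $c$ in the assumption replaced by $1/C$, confirming that the regularity needed for the convergence theorem is implied by Lipschitz continuity of the hypothesis class plus a lower bound on the joint density.
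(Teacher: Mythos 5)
Your proposal is correct and is exactly the intended derivation: apply the preceding lemma to $h = h_1 - h_2$ on the cube $[0,1]^{d_A+d_X}$, noting that the difference of two $[-R,R]$-bounded, $L$-Lipschitz members of $\mathcal{H}_g$ is $[-2R,2R]$-bounded and $2L$-Lipschitz, so the lemma's constant $2R'+2\sqrt{d}L'$ with $(R',L')=(2R,2L)$ becomes $4R+4\sqrt{d_A+d_X}L$, matching $C$. The paper treats the corollary as an immediate consequence of the lemma in the same way, so there is nothing to add beyond your (correct) observation that this is \cref{assum:well-support} with $\beta = (d_A+d_X+2)/2$.
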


Note that the assumption on hypothesis space is easy to satisfy since all neural network is Lipchitz function if we use the ReLU activation and regularize the operator norm of the weight in each layer.

\subsection{Consistency Results} \label{sec:consist-proof}

\paragraph{Proof of \cref{lem:g-consist}}
We use the following Rademacher bound to prove the consistency \citep{FoundationOfML}.

\begin{prop}{\citep[Theorem 11.3]{FoundationOfML}} \label{prop:bound}
    Let $\mathcal{X}$ be a measurable space and $\mathcal{H}$ be a family of functions mapping from $\mathcal{X}$ to $\mathcal{Y} \subseteq [-R, R]$. Given fixed dataset $S = ((y_1, x_1), (y_2, x_2), \dots, (y_n, x_n)) \in (\mathcal{X} \times \mathcal{Y})^n$, the empirical Rademacher complexity is given by
    \begin{align*}
        \hat{\mathfrak{R}}_S(\mathcal{H}) = \expect[\vec{\sigma}]{\frac1n \sup_{h\in \mathcal{H}} \sum_{i=1}^n \sigma_i h(x_i)},
    \end{align*}
    where $\vec{\sigma}=(\sigma_1,\dots,\sigma_n)$, 
    with $\sigma_i$ independent  random variables taking values in $\{-1,+1\}$ with equal probability. Then, for any $\delta > 0$, with probability at least $1-\delta$ over the draw of an i.i.d sample $S$ of size $n$,  each of following holds for all $h \in \mathcal{H}$:
    \begin{align*}
        &\expect{(Y - h(X))^2} \leq \frac1n \sum_{i=1}^n (y_i - h(x_i))^2 + 8R\hat{\mathfrak{R}}_S(\mathcal{H}) + 4R^2 \sqrt{\frac{\log 2/\delta}{2n}},\\
        & \frac1n \sum_{i=1}^n (y_i - h(x_i))^2 \leq \expect{(Y - h(X))^2} +8R\hat{\mathfrak{R}}_S(\mathcal{H}) + 4R^2 \sqrt{\frac{\log 2/\delta}{2n}}.
    \end{align*}
\end{prop}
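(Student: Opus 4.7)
The plan is to treat this as a standard squared-loss generalization result derived through the symmetrization/Rademacher pipeline together with Talagrand's contraction lemma. First I would introduce the loss class $\mathcal{L} = \{(x,y) \mapsto (y - h(x))^2 : h \in \mathcal{H}\}$ and note that because $Y, h(X) \in [-R, R]$, every $\ell \in \mathcal{L}$ takes values in $[0, 4R^2]$. The two inequalities in the statement are then just the two-sided uniform deviation bound for this bounded class, so it suffices to control $\sup_{\ell \in \mathcal{L}} |\mathbb{E}[\ell(X,Y)] - \tfrac{1}{n}\sum_i \ell(x_i, y_i)|$.

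Second, I would apply the classical McDiarmid/symmetrization argument. Changing one sample perturbs the supremum by at most $4R^2/n$, so McDiarmid's inequality gives, with probability at least $1 - \delta/2$,
\begin{align*}
\sup_{\ell \in \mathcal{L}} \bigl(\mathbb{E}[\ell] - \hat{\mathbb{E}}_S[\ell]\bigr) \;\leq\; \mathbb{E}\!\left[\sup_{\ell \in \mathcal{L}} \bigl(\mathbb{E}[\ell] - \hat{\mathbb{E}}_S[\ell]\bigr)\right] + 4R^2\sqrt{\tfrac{\log(2/\delta)}{2n}}.
\end{align*}
Standard symmetrization bounds the right-hand expectation by $2\mathfrak{R}_n(\mathcal{L})$. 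A second application of McDiarmid's inequality (again with bounded differences $4R^2/n$) lets me replace the population Rademacher complexity $\mathfrak{R}_n(\mathcal{L})$ by its empirical counterpart $\hat{\mathfrak{R}}_S(\mathcal{L})$ at the cost of another $4R^2\sqrt{\log(2/\delta)/(2n)}$ term, absorbed by a union bound into the overall $1 - \delta$ statement.

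Third, I would pass from $\hat{\mathfrak{R}}_S(\mathcal{L})$ to $\hat{\mathfrak{R}}_S(\mathcal{H})$ via Talagrand's contraction lemma. For each fixed $y \in [-R, R]$, the map $t \mapsto (y - t)^2$ is $4R$-Lipschitz on $[-R, R]$ because its derivative $2(t-y)$ has absolute value at most $4R$. Talagrand's lemma (in the form that allows a sample-dependent Lipschitz envelope) then yields $\hat{\mathfrak{R}}_S(\mathcal{L}) \leq 4R\,\hat{\mathfrak{R}}_S(\mathcal{H})$. Plugging this into the previous display produces the $8R\hat{\mathfrak{R}}_S(\mathcal{H})$ term in the stated bound.

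Finally, the reverse inequality (empirical risk upper bounded by population risk) is obtained by running the same chain of arguments on $-\ell$ in place of $\ell$, which leaves the Rademacher complexity unchanged (by symmetry of the $\sigma_i$) and gives the matching two-sided guarantee. A union bound over the two directions absorbs another factor of $2$ in $\delta$. The main bookkeeping obstacle is just tracking constants so that the Lipschitz constant $4R$, the symmetrization factor $2$, and the bound $M = 4R^2$ combine into precisely the $8R$ and $4R^2$ coefficients in the statement; no genuinely new ideas are required since this is a textbook result (\citealp[Ch.~11]{FoundationOfML}).
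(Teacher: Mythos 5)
The paper itself gives no proof of this proposition: it is imported verbatim (with adapted constants) from Mohri et al., Theorem 11.3, and the textbook argument is exactly the route you sketch — bounded loss class, McDiarmid plus symmetrization, Talagrand contraction with the sample-dependent Lipschitz functions $t\mapsto (y_i-t)^2$ of constant $4R$, and a symmetric argument for the reverse direction. Structurally your proposal is sound, and the contraction step (including the remark that the Lipschitz envelope may depend on $y_i$) is handled correctly, giving $2\hat{\mathfrak{R}}_S(\mathcal{L})\leq 8R\,\hat{\mathfrak{R}}_S(\mathcal{H})$ as required.

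The one genuine slip is the claim that replacing the population Rademacher complexity by the empirical one is "absorbed by a union bound" at no cost. Since $\mathfrak{R}_n(\mathcal{L})$ enters the bound with a factor $2$, the second McDiarmid application contributes an extra $2\cdot 4R^2\sqrt{\log(2/\delta)/(2n)}$, so your chain actually yields $8R\,\hat{\mathfrak{R}}_S(\mathcal{H}) + 12R^2\sqrt{\log(2/\delta)/(2n)}$ — the textbook's "$3M^p$" constant for the empirical-Rademacher version — not the $4R^2$ coefficient in the statement above. The $4R^2$ coefficient belongs to the variant stated with the \emph{population} Rademacher complexity (at confidence $\log(1/\delta)$); as transcribed here the proposition mixes the two variants, and your argument cannot recover it verbatim (nor can the standard one). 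This is a constants-only discrepancy and is immaterial for how the proposition is used downstream (\cref{lem:g-consist} and \cref{thm:main} only need the $O(\hat{\mathfrak{R}}_S + n^{-1/2})$ form), but to prove the literal statement you should either keep $\mathfrak{R}_n(\mathcal{L})$ or accept the factor $3$. A minor further point: the proposition asserts each inequality separately at level $1-\delta$ (the paper later pays $1-2\delta$ when invoking both in \cref{lem:g-consist}), so your closing union bound over the two directions is unnecessary and would only degrade the logarithmic factor.
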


Given \cref{prop:bound}, we can prove the consistency of conditional expectation.

\begin{proof}[Proof of \cref{lem:g-consist}]
    From \cref{prop:bound} and $\hat{g}, g \in \mathcal{H}_g$,  for the probability at least $1-2\delta$, we have followings.
    \begin{align*}
        &\expect{(Y - \hat{g}(A, X))^2} \leq \frac1n \sum_{i=1}^n (y_i - \hat{g}(a_i, x_i))^2 + 8R\hat{\mathfrak{R}}_S(\mathcal{H}_g) + 4R^2 \sqrt{\frac{\log 2/\delta}{2n}}\\
        &\frac1n \sum_{i=1}^n (y_i - g(a_i, x_i))^2\leq  \expect{(Y - g(A, X))^2} + 8R\hat{\mathfrak{R}}_S(\mathcal{H}_g) + 4R^2 \sqrt{\frac{\log 2/\delta}{2n}}
    \end{align*}
    From the minimality of $\hat{g} = \argmin \mathcal{L}_1$, we have 
    \begin{align*}
        &\expect{(Y - \hat{g}(A, X))^2} \leq \expect{(Y - g(A, X))^2} + 16R\hat{\mathfrak{R}}_S(\mathcal{H}_g) + 8R^2 \sqrt{\frac{\log 2/\delta}{2n}}\\
        \Leftrightarrow \quad & \expect{(g(A, X) - \hat{g}(A, X))^2} \leq 16R\hat{\mathfrak{R}}_S(\mathcal{H}_g) + 8R^2 \sqrt{\frac{\log 2/\delta}{2n}}.
    \end{align*}
    Taking the square root of both sides completes the proof.
\end{proof}

\paragraph{Empirical Rademacher Complexity of $\mathcal{H}_g$} We discuss the empirical  Rademacher complexity of $\mathcal{H}_g$ when we use feed-forward neural network for features $\vec{\phi}_A, \vec{\phi}_X$ here. The discussion is based on a ``peeling'' argument proposed in \citet{NNRademacher}.

\begin{prop}[\citep{NNRademacher}, Theorem 1] \label{prop:original-rademacher}
    Let hypothesis space of $L$ layer neural net be $\mathcal{H}_{\mathrm{NN}}$ that 
    \begin{align*}
        \mathcal{H}_{\mathrm{NN}} = \left\{f: \mathbb{R}^D \to \mathbb{R}~\middle|~ f(s) = W^{(L)}\sigma\left(W^{(L-1)} \sigma(\dots\sigma(W^{(1)} s)\right), \prod_{i=1}^L \|W^{(i)}\|_{p,q} \leq \gamma \right\},
    \end{align*}
    where $\sigma$ is ReLU function and $W^{(1)} \in \mathbb{R}^{D \times H}, W^{(L)} \in \mathbb{R}^{1 \times H}, W^{(2)},\dots,W^{(L-1)} \in \mathbb{R}^{H \times H}$ are weights. The norm $\|\cdot\|_{p,q}$ is matrix $L_{p,q}$-norm $\sup_{x\neq 0} \|Wx\|_{q} /\|x\|_p$. Then, for any $L, q \geq 1$, any $1\leq p \leq \infty$, and any set $S = \{s_1,\dots,s_n\}$, the empirical Rademacher complexity is bounded as 
    \begin{align*}                  \hat{\mathfrak{R}}_S(\mathcal{H}_{\mathrm{NN}}) \leq \sqrt{\frac{1}{n}\left(\gamma^2 2H^{\left[\frac1{p^*} - \frac1q\right]_+}\right)^{2(L-1)}\left(\min\{p^* , 4 \log (2D)\}\right) \max_i \|s_i\|_{p^*}}
    \end{align*}
    for $p^* = 1/(1-1/p)$ and $[x]_+ = \max\{0, x\}$.
\end{prop}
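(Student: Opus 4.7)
The plan is to prove this via the layer-wise ``peeling'' argument of \citet{NNRademacher}. The starting point is
\begin{align*}
\hat{\mathfrak{R}}_S(\mathcal{H}_{\mathrm{NN}}) = \mathbb{E}_{\vec{\sigma}} \sup_{\{W^{(i)}\}} \frac{1}{n} \sum_{i=1}^n \sigma_i \, W^{(L)} \sigma\bigl(W^{(L-1)} \sigma(\cdots \sigma(W^{(1)} s_i))\bigr),
\end{align*}
and the goal is to progressively strip away the $L$ weight matrices until only a Rademacher sum in the inputs $s_i$ remains.

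First I would peel off the top layer. Because $W^{(L)} \in \mathbb{R}^{1 \times H}$, its operator norm $\|W^{(L)}\|_{p,q}$ coincides with the dual vector norm $\|W^{(L)}\|_{p^*}$. Writing $\sum_i \sigma_i W^{(L)} v_i = W^{(L)} \sum_i \sigma_i v_i$ and applying H\"{o}lder bounds this by $\|W^{(L)}\|_{p^*} \|\sum_i \sigma_i v_i\|_p$, where $v_i$ is the penultimate activation. Combined with the norm-product budget and the positive homogeneity of the sub-network, this reduces the problem to controlling $\gamma \cdot \mathbb{E}_{\vec{\sigma}} \sup \|\tfrac{1}{n}\sum_i \sigma_i v_i\|_p$ where the remaining weights are normalized so that their product of norms is one.

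Next I would peel the hidden layers inductively for $k = L-1, \dots, 1$. Each peeling step combines three ingredients: positive homogeneity of the ReLU to pull out $\|W^{(k)}\|_{p,q}$; a contraction inequality to strip the ReLU; and a H\"{o}lder comparison between the $\ell_q$ output norm and the $\ell_{p^*}$ input norm on $\mathbb{R}^H$, which contributes $H^{[1/p^* - 1/q]_+}$ per layer. The delicate point is the second ingredient: a naive application of the Ledoux--Talagrand contraction at each of the $L-1$ layers would introduce a spurious multiplicative factor proportional to $L$. To avoid this, \citet{NNRademacher} performs the peeling inside an exponential moment of the form $\tfrac{1}{\lambda}\log \mathbb{E}_{\vec{\sigma}} \exp(\lambda X)$ applied to the current residual quantity $X$, exchanges the supremum with the expectation via Jensen's inequality, and calibrates $\lambda$ at the end to convert the exponential bound back to a linear one. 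Iterating through the $L-1$ hidden layers accumulates the multiplicative factor $(\gamma^2 \cdot 2 H^{[1/p^* - 1/q]_+})^{2(L-1)}$, where the squared exponent reflects the fact that the MGF step naturally operates on squared quantities.

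Finally, after all layers have been peeled the residual quantity is $\mathbb{E}_{\vec{\sigma}} \|\sum_{i=1}^n \sigma_i s_i\|_{p^*}$. Khintchine's inequality in $\ell_{p^*}$ bounds this by a constant times $\sqrt{n\, p^*\, \max_i \|s_i\|_{p^*}^2}$; when $p^* = \infty$, a standard sub-Gaussian maximal inequality over the $D$ coordinates of $\sum_i \sigma_i s_i$ yields the replacement $p^* \leftarrow 4\log(2D)$, producing the $\min\{p^*, 4\log(2D)\}$ factor. Combining these pieces with the accumulated peeling constants and taking the outer square root gives the stated bound. The main obstacle is precisely avoiding the $\sqrt{L}$ blow-up from naive contraction, and the MGF peeling trick is the non-obvious step that makes the depth dependence enter only through the explicit norm-product constraint $\gamma$ and the $(L-1)$-power of the per-layer constant.
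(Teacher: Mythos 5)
A point of order first: the paper itself contains no proof of \cref{prop:original-rademacher}; it is imported verbatim as Theorem~1 of \citet{NNRademacher} and used as a black box in \cref{lem:rademacher-neuralnet-difficult}, so there is no internal argument to compare against, and your proposal has to be judged as a reconstruction of the cited peeling proof. Your skeleton is the right one: peel the scalar output layer by duality of the $(p,q)$ operator norm for a single row plus H\"older; peel each hidden layer using positive homogeneity of the ReLU to extract the weight-norm budget, a contraction step to strip the nonlinearity, and an $\ell_q$-to-$\ell_{p^*}$ comparison on $\mathbb{R}^H$ contributing $H^{[1/p^*-1/q]_+}$; and finish with Khintchine (giving the $p^*$ term) or a Massart-type maximal inequality over coordinates (giving $4\log(2D)$), which is exactly where $\min\{p^*,4\log(2D)\}$ and $\max_i\|s_i\|_{p^*}$ come from.

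Where you go wrong is the purported key difficulty. A naive layer-by-layer Ledoux--Talagrand contraction does not cost a factor proportional to $L$; it costs a constant factor (essentially $2$) per peeled layer, i.e. $2^{L-1}$ in total, and that factor is precisely the $2$ sitting inside $\left(\gamma^2\, 2H^{[1/p^*-1/q]_+}\right)^{2(L-1)}$ in the target bound. In other words, the stated inequality \emph{is} the output of the naive peeling; the exponential-moment (MGF) device of Golowich--Rakhlin--Shamir exists precisely to remove that exponential-in-depth factor and produces bounds of a different shape (with $\sqrt{L}$-type dependence and different per-layer norm quantities), so invoking it here is a misdiagnosis. Likewise, the claim that the exponent $2(L-1)$ ``reflects that the MGF step operates on squared quantities'' is not a derivation: the exponent is an artifact of the whole right-hand side being placed under a single square root, so the effective per-layer factor is $\gamma^2\,2H^{[1/p^*-1/q]_+}$ raised to the power $L-1$. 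If you drop the MGF detour and simply iterate your three per-layer ingredients, tracking the dual indices carefully and carrying the product-of-norms budget through by homogeneity before the final Khintchine/Massart step, the argument closes and yields the stated form; as written, the ``non-obvious step'' you single out is not the step that proves this theorem.
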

Given this, we can bound the empirical Rademacher complexity of $\mathcal{H}_g$ when each coordinate of features is a truncated member of $\mathcal{H}_{\mathrm{NN}}$.
\begin{lem} \label{lem:rademacher-neuralnet-difficult}
    Let $\mathcal{A}, \mathcal{X} \subset \mathbb{R}^{D}$ and define hypothesis set $\mathcal{H}_{\mathrm{NNFeat.}}(d)$ that
    \begin{align*}
        \mathcal{H}_{\mathrm{NNFeat.}}(d) = \left\{\vec{\phi}: \mathbb{R}^D \to \mathbb{R}^d \middle| \vec{\phi}(s) = (\tilde\sigma(f_1(s)), \tilde\sigma(f_2(s)), \dots,\tilde\sigma( f_d(s)))^\top, f_1,\dots,f_d \in \mathcal{H}_{\mathrm{NN}}\right\}
    \end{align*}
    where $\tilde\sigma$ is a ramp function $\tilde\sigma(x) = \min(1, \max(0, x))$. Consider $\mathcal{H}_g$ that
    \begin{align*}
        \mathcal{H}_g = \{\vec{w}^\top (\vec{\phi}_A(a) \otimes \vec{\phi}_X(x)) ~|~\vec{w} &\in \mathbb{R}^{d_1d_2},\vec{\phi}_A(a)\in \mathbb{R}^{d_1}, \vec{\phi}_X(x)\in \mathbb{R}^{d_2}, \\
        & \|\vec{w}\|_1 \leq R,~\vec{\phi}_A \in \mathcal{H}_{\mathrm{NNFeat.}}(d_1),~\vec{\phi}_X \in \mathcal{H}_{\mathrm{NNFeat.}}(d_2)\}.
    \end{align*}
    Given data set $S = \{(a_1,x_1), \dots (a_n,x_n)\}$, we have
    \begin{align*}
        \hat{\mathfrak{R}}_S(\mathcal{H}_g) 
 \leq 6R \sqrt{\frac{1}{n}\left(\gamma^2 2H^{\left[\frac1{p^*} - \frac1q\right]_+}\right)^{2(L-1)}\left(\min\{p^* , 4 \log (2D)\}\right) \left(\max_i \|a_i\|_{p^*} + \max_i \|x_i\|_{p^*}\right)}.
    \end{align*}
\end{lem}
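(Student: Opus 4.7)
The plan is to reduce the Rademacher complexity of $\mathcal{H}_g$ to the scalar neural-net Rademacher complexity bounded by \cref{prop:original-rademacher}, by peeling away, in turn, the weight vector $\vec w$, the outer product of features, and finally the truncation $\tilde\sigma$.

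First, I would dualize $\vec w$. Fix the Rademacher variables $\vec\sigma$ and the feature maps. Because $\{\vec w : \|\vec w\|_1 \le R\}$ is the $R$-scaled $\ell_1$-ball, Hölder's inequality gives
\begin{align*}
\sup_{\|\vec w\|_1 \le R} \sum_i \sigma_i \vec w^\top (\vec\phi_A(a_i)\otimes\vec\phi_X(x_i)) = R \max_{j\in[d_1], k\in[d_2]} \Bigl| \sum_i \sigma_i \phi_{A,j}(a_i)\phi_{X,k}(x_i)\Bigr|.
\end{align*}
Since each coordinate $\phi_{A,j}$ and $\phi_{X,k}$ can independently be any element of $\mathcal{H}_{\mathrm{NN\text{-}trunc}} := \{\tilde\sigma\circ f : f\in\mathcal{H}_{\mathrm{NN}}\}$, the joint sup over feature maps and indices collapses to $\sup_{\phi_A,\phi_X\in \mathcal{H}_{\mathrm{NN\text{-}trunc}}}|\sum_i \sigma_i \phi_A(a_i)\phi_X(x_i)|$. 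Thus $n\hat{\mathfrak{R}}_S(\mathcal{H}_g)$ equals $R$ times the expectation of this scalar bilinear Rademacher sup.

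Second, I would linearize the bilinear form by polarization. Write $uv = \tfrac14[(u+v)^2 - (u-v)^2]$ so that the bilinear sum decomposes into two quadratic sums, which the Ledoux–Talagrand contraction principle converts to linear ones. Concretely, $\phi_A+\phi_X\in[0,2]$ and $\phi_A-\phi_X\in[-1,1]$, and $u\mapsto u^2$ vanishes at $0$ with Lipschitz constants $4$ and $2$ on those intervals respectively; contraction (combined with the standard factor $2$ that handles $|\cdot|$ via the symmetry of $\sigma$) then reduces each quadratic sum to a linear sum in $\phi_A\pm \phi_X$. A triangle inequality splits $\phi_A\pm\phi_X$ into separate suprema over $\phi_A$ and $\phi_X$, giving
\begin{align*}
\mathbb{E}_\sigma \sup_{\phi_A,\phi_X} \Bigl|\sum_i \sigma_i \phi_A(a_i)\phi_X(x_i)\Bigr| \;\le\; C_1\, n\bigl(\hat{\mathfrak{R}}_S^{\{a_i\}}(\mathcal{H}_{\mathrm{NN\text{-}trunc}}) + \hat{\mathfrak{R}}_S^{\{x_i\}}(\mathcal{H}_{\mathrm{NN\text{-}trunc}})\bigr)
\end{align*}
for an explicit absolute constant $C_1$ coming from the polarization weights, the two Lipschitz factors, the absolute-value factor, and the triangle inequality.

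Third, I would strip the ramp and invoke \cref{prop:original-rademacher}. Since $\tilde\sigma$ is $1$-Lipschitz with $\tilde\sigma(0)=0$, one more contraction gives $\hat{\mathfrak{R}}_S(\mathcal{H}_{\mathrm{NN\text{-}trunc}}) \le \hat{\mathfrak{R}}_S(\mathcal{H}_{\mathrm{NN}})$, and \cref{prop:original-rademacher} applied to the $\{a_i\}$ and $\{x_i\}$ samples separately yields $\sqrt{K\max_i\|a_i\|_{p^*}}$ and $\sqrt{K\max_i\|x_i\|_{p^*}}$, respectively, with $K = \tfrac1n(\gamma^22H^{[1/p^*-1/q]_+})^{2(L-1)}\min\{p^*,4\log 2D\}$. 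The elementary inequality $\sqrt{u}+\sqrt{v}\le \sqrt{2(u+v)}$ finally combines the two square roots into the single $\sqrt{K(\max_i\|a_i\|_{p^*}+\max_i\|x_i\|_{p^*})}$ advertised in the statement, with the absolute constants collapsing into the factor $6R$.

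The main obstacle is bookkeeping the constants through the polarization–contraction–triangle chain so that the final prefactor matches the stated $6R$; the structural insight, which makes the bound possible at all, is that the tensor-product class behaves, after polarization, like a sum of two independent scalar neural-net Rademacher complexities. An alternative route via Maurer's vector contraction inequality applied to the $\sqrt2$-Lipschitz map $(u,v)\mapsto uv$ on $[0,1]^2$ would give a bound of the same structural form, differing only in the absolute constant.
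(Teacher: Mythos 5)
Your proposal is correct and follows essentially the same route as the paper's proof: peel off the $\ell_1$-constrained weight vector (the paper does this via a convex-hull/extreme-point argument, you via H\"older duality, which is equivalent), polarize the product of features into squares, apply Talagrand's contraction to the squares and then to the ramp $\tilde\sigma$, and finish with \cref{prop:original-rademacher}. The only substantive differences are your choice of polarization identity ($uv=\tfrac14[(u+v)^2-(u-v)^2]$ versus the paper's $uv=\tfrac12[(u+v)^2-u^2-v^2]$) and your explicit use of $\sqrt{u}+\sqrt{v}\le\sqrt{2(u+v)}$ at the end, which affect only the absolute constant --- and in fact correctly expose that the paper's proof delivers $6R(\sqrt{Ka}+\sqrt{Kb})$ rather than the stated $6R\sqrt{K(a+b)}$, so the advertised prefactor should carry an extra $\sqrt{2}$.
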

Note that we have 
\begin{align*}
    \max_{a\in\mathcal{A}} \|\vec{\phi}_A(a)\|_\infty \leq 1, \max_{x\in\mathcal{X}} \|\vec{\phi}_X(x)\|_\infty \leq 1
\end{align*}
since we apply $\tilde\sigma$ in the features. The proof is given as follows.
\begin{proof}
    Let us define the following hypothesis spaces.
    \begin{align*}
        &\tilde{\mathcal{H}}_{\mathrm{NN}} = \{\tilde\sigma \circ f | f \in \mathcal{H}_{\mathrm{NN}}\},\\
        &\tilde{\mathcal{H}}^2_{\mathrm{NN}} = \{\tilde{f}_1(a)\tilde{f}_2(x) | \tilde{f}_1, \tilde{f}_2 \in \tilde{\mathcal{H}}_{\mathrm{NN}}\}.
    \end{align*}
    Then, from the definition, we have 
     \begin{align*}
        \mathcal{H}_g \subset \left\{\sum_{i=1}^{d_1}\sum_{j=1}^{d_2}  w_{ij} h_{ij}(a,x) ~\middle|~ \sum_{i=1}^{d_1}\sum_{j=1}^{d_2}  |w_{ij}| \leq R, \forall i,j~ h_{ij} \in \tilde{\mathcal{H}}^2_{\mathrm{NN}}\right\}.
    \end{align*}
    Since the maximum of a linear function of $\vec{w}$ over the constraint $\|\vec{w}\|\leq R$ is achieved for the
    values satisfying $\|\vec{w}\| =  R$, we have 
    \begin{align*}
        \hat{\mathfrak{R}}_S(\mathcal{H}_g) &\leq \hat{\mathfrak{R}}_S\left(\left\{\sum_{i=1}^{d_1}\sum_{j=1}^{d_2}  w_{ij} h_{ij}(a,x) ~\middle|~ \sum_{i=1}^{d_1}\sum_{j=1}^{d_2}  |w_{ij}| \leq R, \forall i,j~ h_{ij} \in \tilde{\mathcal{H}}^2_{\mathrm{NN}}\right\}\right)\\
        &= \hat{\mathfrak{R}}_S\left(\left\{\sum_{i=1}^{d_1}\sum_{j=1}^{d_2}  w_{ij} h_{ij}(a,x) ~\middle|~ \sum_{i=1}^{d_1}\sum_{j=1}^{d_2}  |w_{ij}| = R, \forall i,j~ h_{ij} \in \tilde{\mathcal{H}}^2_{\mathrm{NN}}\right\}\right)\\
        &\leq R\hat{\mathfrak{R}}_S\left(\left\{\sum_{i=1}^{d_1}\sum_{j=1}^{d_2}  w_{ij} h_{ij}(a,x) ~\middle|~ \sum_{i=1}^{d_1}\sum_{j=1}^{d_2}  |w_{ij}| = 1, \forall i,j~ h_{ij} \in \tilde{\mathcal{H}}^2_{\mathrm{NN}}\right\}\right)
    \end{align*}
    Let $\tilde{\mathcal{H}}^2_\mathrm{NN} - \tilde{\mathcal{H}}^2_\mathrm{NN}$ be the function space defined as 
    \begin{align*}
        \tilde{\mathcal{H}}^2_\mathrm{NN} - \tilde{\mathcal{H}}^2_\mathrm{NN} = \left\{h_1(a,x) - h_2(a,x) ~\middle|~ h_1,h_2 \in \tilde{\mathcal{H}}^2_\mathrm{NN}\right\}.
    \end{align*}
    Since $\tilde{\mathcal{H}}^2_\mathrm{NN}$ contains the zero function, the final hypothesis space is the subset the convex hull of $\tilde{\mathcal{H}}^2_\mathrm{NN} - \tilde{\mathcal{H}}^2_\mathrm{NN}$ because 
    \begin{align*}
        \sum_{i=1}^{d_1}\sum_{j=1}^{d_2}  w_{ij} h_{ij}(a,x) = \sum_{w_{i,j}\geq 0}w_{ij} (h_{ij}(a,x)-0) + \sum_{w_{i,j} < 0} |w_{ij}| (0 - h_{ij}(a,x)).
    \end{align*}
    Therefore, we have 
    \begin{align*}
      \hat{\mathfrak{R}}_S(\mathcal{H}_g) &\leq
      R \hat{\mathfrak{R}}_S(\tilde{\mathcal{H}}^2_\mathrm{NN} - \tilde{\mathcal{H}}^2_\mathrm{NN}) \leq
      2R \hat{\mathfrak{R}}_S(\tilde{\mathcal{H}}^2_\mathrm{NN}).
    \end{align*}
    Now, we can bound $\hat{\mathfrak{R}}_S(\tilde{\mathcal{H}}^2_\mathrm{NN})$ as 
    \begin{align*}
    \hat{\mathfrak{R}}_S(\tilde{\mathcal{H}}^2_\mathrm{NN}) &= \hat{\mathfrak{R}}_S(\{\tilde{f}_1(a)\tilde{f}_2(x) | \tilde{f}_1, \tilde{f}_2 \in \tilde{\mathcal{H}}_{\mathrm{NN}}\})\\
    &= \hat{\mathfrak{R}}_S\left(\left\{\frac12\left((\tilde{f}_1(a) + \tilde{f}_2(x))^2 - (\tilde{f}_1(a))^2 - (\tilde{f}_2(x))^2\right) \middle | \tilde{f}_1, \tilde{f}_2 \in \tilde{\mathcal{H}}_{\mathrm{NN}}\right\}\right)\\
    &=\frac12\hat{\mathfrak{R}}_S\left(\left\{(\tilde{f}_1(a) + \tilde{f}_2(x))^2 \middle | \tilde{f}_1, \tilde{f}_2 \in \tilde{\mathcal{H}}_{\mathrm{NN}}\right\}\right) + \frac12\hat{\mathfrak{R}}_S\left(\left\{(\tilde{f}_1(a))^2 \middle | \tilde{f}_1 \in \tilde{\mathcal{H}}_{\mathrm{NN}}\right\}\right)\\
    &\quad\quad +\frac12\hat{\mathfrak{R}}_S\left(\left\{(\tilde{f}_2(x))^2 \middle | \tilde{f}_2 \in \tilde{\mathcal{H}}_{\mathrm{NN}}\right\}\right)\\
    &\leq 2 \hat{\mathfrak{R}}_S\left(\left\{\tilde{f}_1(a) + \tilde{f}_2(x) \middle | \tilde{f}_1, \tilde{f}_2 \in \tilde{\mathcal{H}}_{\mathrm{NN}}\right\}\right)  + \hat{\mathfrak{R}}_{S_A}(\tilde{\mathcal{H}}_\mathrm{NN}) + \hat{\mathfrak{R}}_{S_X}(\tilde{\mathcal{H}}_\mathrm{NN})\\
    &= 3\hat{\mathfrak{R}}_{S_A}(\tilde{\mathcal{H}}_\mathrm{NN}) + 3\hat{\mathfrak{R}}_{S_X}(\tilde{\mathcal{H}}_\mathrm{NN}),
    \end{align*}
    where $S_A = \{a_i\}$ and $S_X=\{x_i\}$. Here, we used Talagrand’s contraction
lemma \citep[Lemma 5.11]{FoundationOfML} in the inequality. Again, from Talagrand’s contraction
lemma, we have 
\begin{align*}
    \hat{\mathfrak{R}}_{S_A}(\tilde{\mathcal{H}}_\mathrm{NN}) \leq \hat{\mathfrak{R}}_{S_A}({\mathcal{H}}_\mathrm{NN}),~ \hat{\mathfrak{R}}_{S_X}(\tilde{\mathcal{H}}_\mathrm{NN}) \leq \hat{\mathfrak{R}}_{S_X}({\mathcal{H}}_\mathrm{NN}), 
\end{align*}
since $\tilde\sigma$ is an 1-Lipchitz function.

Combining them, we have 
\begin{align*}
    \hat{\mathfrak{R}}_{S}({\mathcal{H}}_g) \leq 6R(\hat{\mathfrak{R}}_{S_A}({\mathcal{H}}_\mathrm{NN}) + \hat{\mathfrak{R}}_{S_X}({\mathcal{H}}_\mathrm{NN})).
\end{align*}
This and \cref{prop:original-rademacher} completes the proof.
\end{proof}

Now, we derive the final theorem to show the consistency of the method.

\begin{proof}[Proof of \cref{thm:main}]
From the triangular inequality, we have 
    \begin{align*}
        |\theta_\ATE(a) - \hat{\theta}_\ATE(a)| &\leq \left|\theta - \expect{\hat{g}(a, X)}\right| + \left|\hat{\theta}_\ATE(a) -\expect{\hat{g}(a, X)} \right|
    \end{align*}
For the first term of r.h.s, we have 
\begin{align*}
    \left|\theta_\ATE(a) - \expect{\hat{g}(a, X)}\right| &=  \left|\expect{g(a,X) - \hat{g}(a,X)}\right| \\
    &\leq \expect{|g(a,X) - \hat{g}(a,X)|}\\
    &\leq \sup_{a\in\mathcal{A}, x \in \mathcal{X}} |g(a,x) - \hat{g}(a,x)| 
\end{align*}
For the second term, we have 
\begin{align*}
    \left|\hat{\theta} - \int \expect{\hat{g}(a, X)} \right| & = \left|\hatvec{w}^\top \left(\hatvec{\phi}_A(a) \otimes \frac1n \sum_{i=1}^n\hatvec{\phi}_X(x_i)  - \hatvec{\phi}_A(a) \otimes \expect{\hatvec{\phi}_X(X)}\right) \right|\\
    &\leq \|\hatvec{w}\|_1\left\| \hatvec{\phi}_A(a) \otimes \frac1n \sum_{i=1}^n\hatvec{\phi}_X(x_i)  - \hatvec{\phi}_A(a) \otimes \expect{\hatvec{\phi}_X(X)} \right\|_\infty\\
    &\leq \|\hatvec{w}\|_1 \left\| \hatvec{\phi}_A(a) \right\|_\infty \left\| \frac1n \sum_{i=1}^n\hatvec{\phi}_X(x_i)  - \expect{\hatvec{\phi}_X(X)} \right\|_\infty\\
    &\leq R \left\| \frac1n \sum_{i=1}^n\hatvec{\phi}_X(x_i)  - \expect{\hatvec{\phi}_X(X)} \right\|_\infty
\end{align*}
Therefore, we have
\begin{align*}
    |\theta_\ATE(a) - \hat{\theta}_\ATE(a)| \leq \sup_{a,x} |g(a,x) - \hat{g}(a,x)| +   R \left\| \frac1n \sum_{i=1}^n\hatvec{\phi}_X(x_i)  - \expect{\hatvec{\phi}_X(X)} \right\|_\infty.
\end{align*}
Using \cref{lem:g-consist,lem:marginal-EQ} and \cref{assum:well-support}, we have 
\begin{align*}
    &\sup_{a,x} |g(a,x) - \hat{g}(a,x)| \leq \frac1c \left(16R\hat{\mathfrak{R}}_S(\mathcal{H}_g) + 8R^2 \sqrt{\frac{\log 2/\delta}{2n}}\right)^{1/2\beta},\\
     &\left\|\expect{\hatvec{\phi}_X(X)} - \frac1n \sum_{i=1}^n \hatvec{\phi}_X(x_i) \right\|_\infty \leq \sqrt{\frac{2\log ({2d_2}/{\delta})}{n}}
\end{align*}
with probability at least $1-4\delta$. Combining them and applying \cref{lem:rademacher-neuralnet-difficult} completes the proof for ATE bound. For ATT, we can derive the followings with the same discussion
\begin{align*}
    |\theta_\ATT(a; a') - \hat{\theta}_\ATT(a; a')| \leq \sup_{a,x} |g(a,x) - \hat{g}(a,x)| +   \sup_{a'\in\mathcal{A}} R \left\| \hatvec{f}_{\hatvec{\phi}_X}(a')  - \expect{\hatvec{\phi}_X(X)|A=a'} \right\|_\infty.
\end{align*}
Using \cref{lem:conditional-EQ} and the assumption made in \cref{thm:main}, we have 
\begin{align*}
    \left\| \hatvec{f}_{\hatvec{\phi}_X}(a')  - \expect{\hatvec{\phi}_X(X)|A=a'} \right\| \leq \frac1{c'}\left(16\hat{\mathfrak{R}}_S(\mathcal{H}_f) + 8\sqrt{\frac{\log (2d_2/\delta)}{2n}}\right)^{1/2\beta'}.
\end{align*}
If we use neural network hypothesis space $\mathcal{H}_f$ considered in \cref{prop:original-rademacher}, we can see that the ATT bound holds.
\end{proof}

\subsection{Limitation of Smoothness Assumption on Riesz Representer}\label{sec:exponential-smoothness}

In \citet{chernozhukov2022riesznet}, we consider a functional $m$ such that the causal parameter $\theta$ can be written as $\theta = \expect{m(g, (A,X))}$, where $g$ is the conditional expectation $g(a,x) = \expect{Y|A=a, X=x}$. Then, a Riesz Representer $\alpha$, which satisfies $\expect{m(g, (A,X))} = \expect{\alpha(A,X)g(A,X)}$, exists as long as
\begin{align*}
    \expect{(m^2(\alpha,(A,X))} \leq M \|\alpha \|^2_{P(A,X)},
\end{align*}
for all $\alpha \in \mathcal{H}_\alpha$ and a smoothness parameter $M$. When we consider ATE $\theta_\ATE(a)$, the corresponding functional $m$ would be 
\begin{align*}
    m(\alpha, (A,X)) = \alpha(a, X).
\end{align*}
\citet[Theorem 1]{2104.14737} shows that the deviation of estimated the Riesz Representer $\hat{\alpha}$ and the true one $\alpha_0$ scales as linear to the smoothness parameter $M$.
\begin{align*}
    \|\hat{\alpha} - \alpha_0\|^2_{P(A,X)} \leq O(M\delta_n + n^{-1/2}),
\end{align*}
where $\delta_n$ is the critical radius that scales 
\begin{align*}
    \delta_n = O\left(\sqrt{\frac{\log n}{n}}\right).
\end{align*}
when we consider fully connected neural networks.
Now, we show that the smoothness parameter $M$ can have an exponential dependency on the dimension of the space, even for simple $\alpha$. Consider $\mathcal{A} = [-1, 1]^d$ and some compact space $\mathcal{X}$. We assume the uniform distribution for $P(A,X)$. Consider following $\tilde\alpha$
\begin{align*}
    \tilde\alpha(a,x) = \max\left(1 - \sum^d_{i=1}2|a_{[i]}|, 0\right),
\end{align*}
where $a_{[i]}$ denotes $i$-th element of $a$, and here we consider $\tilde\alpha$ that does not depend on $x$. Say, we are interested in estimating $\theta_\ATE(a)$ of $a = \vec{0} =[0,\dots, 0]^\top$, for which 
\begin{align*}
    \expect{(m(\tilde\alpha)(A,X))^2} = \expect{(\tilde\alpha(\vec{0}, X))^2} = 1.
\end{align*}
Now consider $\mathfrak{B}$ that 
\begin{align*}
    \mathfrak{B} = \left\{a \in \mathcal{A}~\middle|~\forall i \in [d], -\frac12 \leq a_{[i]} \leq \frac12\right\}.
\end{align*}
Then, since $\tilde\alpha(a,x) = 0$ for all $a \notin \mathfrak{B}$, we have 
\begin{align*}
    \|\tilde\alpha\|^2_{P(A,X)} &= \int_{\mathcal{A}} |\tilde\alpha(A,X)|^2  \intd P(A, X)\\
    &= \int_{\mathfrak{B}} |\tilde\alpha(A,X)|^2  \intd P(A, X)\\
    &\leq \int_{\mathfrak{B}}  \intd P(A, X) = 1/2^d.
\end{align*}
We use the assumption that $P(A,X)$ is  the uniform distribution to have the last equality. Hence, if $\tilde\alpha \in \mathcal{H}_\alpha$, the smoothness parameter $M$ must have the  exponential dependency
\begin{align*}
    M \geq 2^d.
\end{align*}

\section{Observable Confounder} \label{sec:observable-confounder}
In this section, we consider the case where we have the additional observable confounder, the causal graph of which is given in \cref{fig:causal-graph-obs}.
\begin{figure}
    \centering
    \begin{tabular}{ccc}
\begin{minipage}{0.22\textwidth}
\centering
\begin{tikzpicture}
    \node[state] (D) at (0,0) {$A$};
    \node[state] (C) at (0.2,1.5) {$O$};

    \node[state] (Y) at (2.0,0) {$Y$};
    \node[dotstate] (U) at (1.8,1.5) {$U$};
    \path (U) edge (Y);
    \path (C) edge (Y);
    \path (U) edge (D);
    \path (C) edge (D);
    \path[bidirected] (C) edge[bend left=60] (U); 
    \path (D) edge (Y);
\end{tikzpicture}
\subcaption{General causal graph}
\label{fig:general-causal-obs}
\end{minipage}
&
\begin{minipage}{0.3\textwidth}
\centering
\begin{tikzpicture}
    \node[state] (D) at (0,0) {$A$};
    \node[state] (C) at (-0.75,1.5) {$O$};
    \node[state] (X) at (-1.5,0) {$X$};
    \node[state] (Y) at (1.5,0) {$Y$};
    \node[dotstate] (U) at (0.75,1.5) {$U$};
    \path (U) edge (Y);
    \path (C) edge (Y);
    \path (U) edge (X);
    \path (X) edge (D);
    \path (C) edge (D);
    \path[bidirected] (C) edge[bend left=60] (U);
    \path[bidirected] (C) edge (X);
    \path (D) edge (Y);
\end{tikzpicture}
\subcaption{Back-door adjustment}
\label{fig:backdoor-obs}
\end{minipage}
&
\begin{minipage}{0.3\textwidth}
\centering
\begin{tikzpicture}
    \node[state] (D) at (-1.5,0) {$A$};
    \node[state] (C) at (-0.75,1.5) {$O$};
    \node[state] (M) at (0,0) {$M$};
    \node[state] (Y) at (1.5,0) {$Y$};
    \node[dotstate] (U) at (0.75,1.5) {$U$};
    \path (U) edge (Y);
    \path (C) edge (Y);
    \path (U) edge (D);
    \path (M) edge (Y);
    \path (C) edge (D);
    \path[bidirected] (C) edge[bend left=60] (U);
    \path (C) edge (M);
    \path (D) edge (M);
\end{tikzpicture}
\subcaption{Front-door adjustment}
\label{fig:frontdoor-obs}
\end{minipage}
\end{tabular}
    \caption{Causal graph with observable confounder. The bidirectional arrows mean that we allow both directions or even a common ancestor variable.}
    \label{fig:causal-graph-obs}
\end{figure}
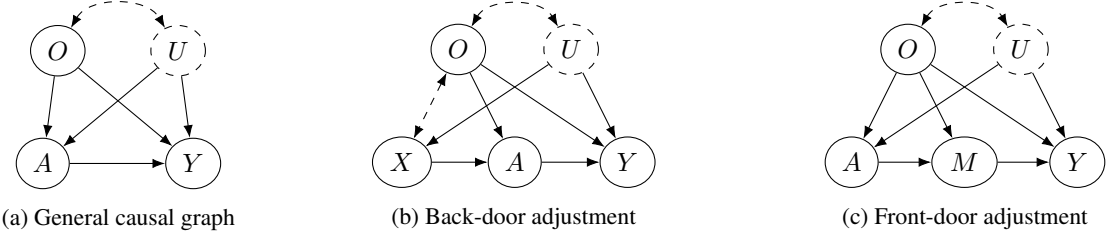

Given the causal graph in \cref{fig:causal-graph-obs}, ATE and ATT are defined as follows.
\begin{align*}
    \theta_\ATE(a) = \expect[U,O]{\expect{Y |U, O, A=a}}, \quad \theta_\ATE(a; a') = \expect[U,O]{\expect{Y |U, O, A=a}}.
\end{align*}
Furthermore, we can consider another causal parameter called \emph{conditional average treatment effect (CATE)}, which is a conditional average of the potential outcome given $O=o$;
 \begin{align*}
     \theta_{\mathrm{CATE}}(a; o) = \expect{Y^{(a)}\middle|O=o}.
 \end{align*}
 Given exchangeability and no inference assumption, we have 
 \begin{align*}
     \theta_{\mathrm{CATE}}(a; o) = \expect[U|O=o]{\expect{Y|U, O=o, A=a}}.
 \end{align*}
These causal parameters can be recovered if the back-door or the front-door variable is provided as follows.

\paragraph{Back-door adjustments:} First, we present the Proposition stating these causal parameters can be recovered if we are given the back-door variable $X$.
\begin{prop}[\citealp{Pearl1995DAG}] \label{prop:identification-backdoor-obs}
    Given the back-door adjustment $X$ in \cref{fig:backdoor-obs}, we have 
    \begin{align*}
        &\theta_\ATE(a) = \expect[X,O]{g(a,O,X)},\\
        &\theta_\ATT(a; a') = \expect[X,O]{g(a,O,X) | A=a'},\\
        &\theta_{\mathrm{CATE}}(a; o) = \expect[X]{g(a, o, X)|O=o}
    \end{align*}
    where $g(a,o,x) = \expect{Y|A=a, O=o, X=x}$.
\end{prop}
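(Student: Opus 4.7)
The plan is to reduce each of the three identities to the original back-door theorem (Proposition \ref{prop:backdoor-identify}) by showing that the pair $(X, O)$ jointly satisfies the back-door criterion relative to $(A, Y)$ in the graph of Figure \ref{fig:backdoor-obs}. Once this is established, the argument proceeds by straightforward applications of conditional exchangeability and the tower rule, mirroring how Proposition \ref{prop:causal-param-def} passes to Proposition \ref{prop:backdoor-identify} in the hidden-confounder-only setting.

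First I would enumerate the back-door paths from $A$ to $Y$ in Figure \ref{fig:backdoor-obs} and verify that conditioning on $\{X, O\}$ blocks every one of them. The relevant paths are $A \leftarrow X \leftarrow U \to Y$ (blocked by $X$); $A \leftarrow O \to Y$ (blocked by $O$); paths that traverse the bidirected edge $O \leftrightarrow U$, e.g.\ $A \leftarrow O \leftrightarrow U \to Y$ and $A \leftarrow X \leftarrow U \leftrightarrow O \to Y$ (blocked by $O$ or $X$); and paths involving $O \leftrightarrow X$ (blocked by conditioning on both endpoints). Interpreting each bidirected edge as either mutual causation or a common latent ancestor does not change the conclusion, since in either reading the conditioning set intercepts the chain at a non-collider. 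In addition, neither $X$ nor $O$ is a descendant of $A$, so no collider is inadvertently opened.

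Second, with the back-door criterion in hand, conditional exchangeability $Y^{(a)} \indepe A \mid (X, O)$ holds, and so
\begin{align*}
    \mathbb{E}[Y^{(a)} \mid X, O] = \mathbb{E}[Y \mid A=a, X, O] = g(a, O, X).
\end{align*}
Each identity then follows by taking the appropriate outer expectation. For $\theta_\ATE(a) = \mathbb{E}[Y^{(a)}]$ I marginalize over $(X, O)$. For $\theta_\ATT(a; a') = \mathbb{E}[Y^{(a)} \mid A = a']$ I condition on $A = a'$ before marginalizing over $(X, O)$; here one also uses that $(X, O)$ and $Y^{(a)}$ interact only through the above conditional expectation, so the outer conditional on $A = a'$ passes through to $(X, O)$. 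For $\theta_{\mathrm{CATE}}(a; o) = \mathbb{E}[Y^{(a)} \mid O = o]$ I condition on $O = o$ and marginalize only over $X$.

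The only nontrivial step is the d-separation verification in the presence of the bidirected edges $O \leftrightarrow U$ and $O \leftrightarrow X$, which must be handled carefully in either the ADMG (acyclic directed mixed graph) or latent projection interpretation. Once that is settled, the remainder is a routine application of the tower property and the definition of the potential outcome. The ATT identity additionally requires that $A \to Y$ does not create conditioning issues when computing $\mathbb{E}[g(a, O, X) \mid A = a']$, but this is immediate because $g(a, \cdot, \cdot)$ is a deterministic function of $(O, X)$ evaluated at the fixed treatment $a$.
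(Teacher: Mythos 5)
Your proposal is correct and takes the approach the paper intends: the paper states this proposition without proof, citing \citet{Pearl1995DAG}, and the result is precisely the back-door theorem applied with the joint adjustment set $\{X,O\}$, followed by the tower-rule manipulations you describe for the ATE, ATT, and CATE outer expectations. The one imprecision is your claim that paths through $O \leftrightarrow X$ are ``blocked by conditioning on both endpoints'' --- conditioning on a collider \emph{opens} rather than blocks (e.g.\ $X$ is a collider on $A \leftarrow O \leftrightarrow X \leftarrow U \to Y$, so conditioning on $X$ activates that segment) --- but the conclusion survives because on any such path the endpoint adjacent to $A$ carries a tail into $A$ and is therefore a non-collider lying in the conditioning set, which blocks the path.
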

Now, we present the deep adaptive feature embedding approach to this. We first learn conditional expectation $\hat{g}$ as 
$\hat{g}(a,o,x) = \hatvec{w}^\top(\hatvec{\phi}_A(a) \otimes \hatvec{\phi}_O(o) \otimes \hatvec{\phi}_X(x))$, where 
\begin{align}
    \hatvec{w}, \hatvec{\phi}_A, \hatvec{\phi}_O,\hatvec{\phi}_X(x) = \argmin \frac1n \sum_{i=1}^n (y_i - \vec{w}^\top(\vec{\phi}_A(a_i) \otimes \vec{\phi}_O(o_i) \otimes \vec{\phi}_X(x_i)))^2 \label{eq:backdoor-stage1-obs}
\end{align}
given data $(y_i, a_i, o_i, x_i)$. Here, $\vec{w}$ is the weight and $\vec{\phi}_A, \vec{\phi}_O, \vec{\phi}_X$ are the feature maps. From \cref{prop:identification-backdoor-obs}, we have 
\begin{align*}
    &\theta_\ATE(a) \simeq \hatvec{w}^\top \left(\hatvec{\phi}_A(a) \otimes \expect[X,O]{\hatvec{\phi}_O(O) \otimes \hatvec{\phi}_X(X)}\right),\\
    &\theta_\ATT(a; a') \simeq \hatvec{w}^\top \left(\hatvec{\phi}_A(a) \otimes \expect[X,O]{\hatvec{\phi}_O(O) \otimes \hatvec{\phi}_X(X)\middle|A=a'}\right),\\
    &\theta_{\mathrm{CATE}}(a; o) \simeq \hatvec{w}^\top \left(\hatvec{\phi}_A(a) \otimes \hatvec{\phi}_O(o)  \otimes \expect{\hatvec{\phi}_X(X)\middle|O=o}\right)
\end{align*}
Therefore, by estimating the feature embeddings,  we have
\begin{align*}
    &\hat\theta_\ATE(a) = \hatvec{w}^\top \left(\hatvec{\phi}_A(a) \otimes \frac1n \sum_{i=1}^n \left(\hatvec{\phi}_O(o_i) \otimes \hatvec{\phi}_X(x_i)\right)\right),\\
    &\hat\theta_\ATT(a; a') = \hatvec{w}^\top \left(\hatvec{\phi}_A(a) \otimes \hatvec{f}_{\hatvec{\phi}_O \otimes \hatvec{\phi}_X}(a')\right),\\
    &\hat\theta_{\mathrm{CATE}}(a; o) = \hatvec{w}^\top \left(\hatvec{\phi}_A(a) \otimes \hatvec{\phi}_O(o) \otimes \hatvec{f}_{\hatvec{\phi}_X}(o)\right)
\end{align*}
where $\hatvec{f}_{\hatvec{\phi}_O \otimes \hatvec{\phi}_X}, \hatvec{f}_{\hatvec{\phi}_X}(o)$ are learned from
\begin{align*}
    &\hatvec{f}_{\hatvec{\phi}_O \otimes \hatvec{\phi}_X} = \argmin_{\vec{f}}  \frac1n \sum_{i=1}^n \|\hatvec{\phi}_O(o_i) \otimes \hatvec{\phi}_X(x_i) - \vec{f}(a_i)\|^2\\
    &\hatvec{f}_{\hatvec{\phi}_X} = \argmin_{\vec{f}}  \frac1n \sum_{i=1}^n \|\hatvec{\phi}_X(x_i) - \vec{f}(o_i)\|^2.
\end{align*}

\paragraph{Front-door adjustment:} Given the front-door variable $M$, these causal parameters can be identified as follows.

\begin{prop}[\citealp{Pearl1995DAG}] \label{prop:identification-frontdoor-obs}
    Given the front-door variable $M$ in \cref{fig:frontdoor-obs}, we have 
    \begin{align*}
        &\theta_\ATE(a) = \expect[A']{\expect[O]{\expect[M|O, A=a]{g(A', O, M)}}},\\
        &\theta_\ATT(a; a') = \expect[O]{\expect[M|O, A=a]{g(a', O, M)}},\\
        &\theta_{\mathrm{CATE}}(a; o) = \expect[A']{\expect[M|O=o, A=a]{g(A', o, M)}}
    \end{align*}
    where $g(a, o, m) = \expect{Y|A=a, O=o, M=m}$ and $A'$ follows the identical distribution as $A$.
\end{prop}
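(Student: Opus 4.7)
The plan is to establish each of the three identities by deriving the appropriate g-formula for the intervention distribution on the graph in \cref{fig:frontdoor-obs} and then simplifying using the two conditional independences implied by that graph: (i) $M \indepe U \mid A, O$, since $M$'s only parents are $A$ and $O$, which gives $P(m\mid A=a, o, u) = P(m \mid A=a, o)$, and (ii) $Y \indepe A \mid U, O, M$, since $Y$'s only parents are $U$, $O$, and $M$, which gives $\expect{Y \mid A=a', O=o, M=m, U=u} = \expect{Y \mid U=u, O=o, M=m}$. An immediate consequence that I will invoke repeatedly is the representation $g(a', o, m) = \int \expect{Y\mid u, o, m}\, P(u \mid A=a', O=o)\, du$.

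For the ATE identity, I would start from the g-formula obtained by cutting the incoming edges into $A$ under $do(A=a)$,
\[
\expect{Y^{(a)}} = \iiint P(u, o)\, P(m \mid A=a, o)\, \expect{Y\mid u, o, m}\, du\, do\, dm,
\]
where the two independences above are already baked in. The key manoeuvre is the classical front-door $A'$-trick: decompose $P(u, o) = \int P(o)\, P(a'\mid o)\, P(u \mid a', o)\, da'$ and push the factor $P(u\mid a', o)$ into the innermost $u$-integral, where it collapses into $g(a', o, m)$ via the representation above. Collecting what remains yields $\int P(o)\, P(a'\mid o)\, P(m\mid a, o)\, g(a', o, m)\, do\, dm\, da'$, which is exactly the iterated expectation $\expect[A']{\expect[O]{\expect[M\mid O, A=a]{g(A', O, M)}}}$ with $(A', O)$ drawn jointly from $P(A, O)$.

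The ATT and CATE identities follow the same three-step recipe, differing only in which conditioning is imposed on the outer layers. For the ATT, the target is $\expect{Y^{(a)}\mid A=a'}$, so $P(u, o)$ in the g-formula is replaced by $P(u, o \mid A=a')$; since $a'$ is now fixed rather than integrated, no outer $A'$-averaging is required, the $u$-integral against $P(u\mid a', o)$ again collapses to $g(a', o, m)$, and the remaining outer integration reads as $\expect[O\mid A=a']{\expect[M\mid O, A=a]{g(a', O, M)}}$. For the CATE, the target is $\expect{Y^{(a)}\mid O=o}$, so $o$ is held fixed throughout and the $A'$-trick is applied only to decompose $P(u\mid o) = \int P(a'\mid o)\, P(u \mid a', o)\, da'$, which yields the outer $\expect[A'\mid O=o]{\cdot}$.

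The main obstacle I anticipate is the careful bookkeeping of which conditional law each outer expectation is taken against: the symbols $\expect[A']$ and $\expect[O]$ in the statement must be interpreted as the appropriate conditionals ($P(A'\mid O)$ in ATE, $P(O\mid A=a')$ in ATT, $P(A'\mid O=o)$ in CATE), since reading them as plain marginals would leave spurious discrepancies $P(O)$ versus $P(O\mid a')$ and break the equality unless extra $A$--$O$ independences are assumed. Once this convention is fixed, each identity reduces to one application of the $A'$-trick plus the two graphical independences stated at the outset, and no do-calculus machinery beyond Rule 2/3 conversions is invoked.
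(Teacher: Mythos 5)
Your derivation is sound, and there is no in-paper argument to compare it with: the paper states this proposition without proof, as a citation of Pearl's front-door theorem, and the route you take — truncated factorization under $do(A=a)$, the two graphical independences $M \indepe U \mid A,O$ and $Y \indepe A \mid U,O,M$, the representation $g(a',o,m)=\expect[U|A=a',O=o]{\expect{Y\mid U,O=o,M=m}}$, and the $A'$-decomposition of $P(u\mid o)$ — is exactly the standard conditional front-door argument.

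The caveat you flag at the end is not mere bookkeeping, and you are right to insist on it. In the graph of \cref{fig:frontdoor-obs}, $O$ is a parent of $A$ (and is dependent on $U$), so $A$ and $O$ are dependent, and the displayed identities are generally false if $\expect[A']{\cdot}$ and $\expect[O]{\cdot}$ are read as independent marginal expectations. What your argument actually proves — and what the conditional front-door formula gives — is $\theta_\ATE(a) = \expect[O]{\expect[A'|O]{\expect[M|O,A=a]{g(A',O,M)}}}$ (equivalently, with $(A',O)$ drawn jointly from $P(A,O)$), $\theta_\ATT(a;a') = \expect[O|A=a']{\expect[M|O,A=a]{g(a',O,M)}}$, and $\theta_{\mathrm{CATE}}(a;o) = \expect[A'|O=o]{\expect[M|O=o,A=a]{g(A',o,M)}}$; the literal marginal reading would additionally require something like $A \indepe O$, which this graph does not grant. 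Note that the discrepancy propagates to the estimators given right after the proposition, which average $\hatvec{\phi}_A$ over all $a_i$ and $\hatvec{\phi}_O$ over all $o_i$, i.e.\ they implement the marginal-product reading; under the correct identification, the ATT estimator should target $\expect{\hatvec{\phi}_O(O)\otimes\hatvec{f}_{\hatvec{\phi}_M}(O,a)\mid A=a'}$ and the ATE/CATE estimators should couple $A'$ with $O$ through $P(A\mid O)$ rather than using the product of marginal means. One minor gloss in your write-up: when you replace $P(u,o)$ by $P(u,o\mid A=a')$ for the ATT (and condition on $O=o$ for the CATE), you are implicitly using that the exogenous noises of $M$ and $Y$ remain independent of $(U,O)$ after this conditioning; this holds under the structural-model reading of the graph and is the same level of rigor as the unproved Proposition 3 in the main text, so it is acceptable, but worth stating.
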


For front-door adjustment, we learn conditional expectation $\hat{g}$ as  
$\hat{g}(a,o,x) = \hatvec{w}^\top(\hatvec{\phi}_A(a) \otimes \hatvec{\phi}_O(o) \otimes \hatvec{\phi}_M(m))$, where 
\begin{align*}
    \hatvec{w}, \hatvec{\phi}_A, \hatvec{\phi}_O,\hatvec{\phi}_M(m) = \argmin \frac1n \sum_{i=1}^n (y_i - \vec{w}^\top(\vec{\phi}_A(a_i) \otimes \vec{\phi}_O(o_i) \otimes \vec{\phi}_M(m_i)))^2.
\end{align*}
Then, from \cref{prop:identification-frontdoor-obs}, we have 
\begin{align*}
    &\theta_\ATE(a) \simeq \hatvec{w}^\top \left(\expect{\hatvec{\phi}_A(A)} \otimes \expect[O]{\hatvec{\phi}_O(O) \otimes  \expect[M|O,A=a]{ \hatvec{\phi}_M(M)}}\right),\\
    &\theta_\ATT(a; a') \simeq \hatvec{w}^\top \left(\hatvec{\phi}_A(a') \otimes \expect[O]{\hatvec{\phi}_O(O) \otimes \expect[M|O,A=a]{ \hatvec{\phi}_M(M)}}\right),\\
    &\theta_{\mathrm{CATE}}(a; o) \simeq \hatvec{w}^\top \left(\expect{\hatvec{\phi}_A(A)}  \otimes \hatvec{\phi}_O(o) \otimes \expect[M|O=o,A=a]{ \hatvec{\phi}_M(M)}\right).
\end{align*}
The conditional expectation $ \expect[M|O=o,A=a]{ \hatvec{\phi}_M(M)}$ is estimated as  $\expect[M|O=o,A=a]{\hatvec{\phi}_M(M)} = \hatvec{f}_{\hatvec{\phi}_M}(o, a)$, where
\begin{align*}
    \hatvec{f}_{\hatvec{\phi}_M} = \argmin_{\vec{f}}  \frac1n \sum_{i=1}^n \|\hatvec{\phi}_M(m_i) - \vec{f}(o_i, a_i)\|^2.  
\end{align*}
Then, by replacing the marginal expectation with the empirical average, we have
\begin{align*}
    &\hat\theta_\ATE(a) = \hatvec{w}^\top \left(\left(\frac1n \sum_{i=1}^n \hatvec{\phi}_A(a_i) \right)\otimes \frac1n \sum_{j=1}^n \left(\hatvec{\phi}_O(o_j) \otimes \hatvec{f}_{\hatvec{\phi}_M}(o_j, a)\right)\right),\\
    &\hat\theta_\ATT(a; a') = \hatvec{w}^\top \left(\hatvec{\phi}_A(a') \otimes \frac1n \sum_{i=1}^n \left(\hatvec{\phi}_O(o_i) \otimes \hatvec{f}_{\hatvec{\phi}_M}(o_i, a)\right)\right),\\
    &\hat\theta_{\mathrm{CATE}}(a; o) = \hatvec{w}^\top \left(\left(\frac1n \sum_{i=1}^n \hatvec{\phi}_A(a_i) \right)\otimes  \hatvec{\phi}_O(o) \otimes  \hatvec{f}_{\hatvec{\phi}_M}(o, a)\right).
\end{align*}

\end{document}